\documentclass[journal]{IEEEtran}
\usepackage{amsmath,amsfonts,amssymb}
\usepackage{algorithm}
\usepackage{array}
\usepackage[caption=false,font=normalsize,labelfont=sf,textfont=sf]{subfig}
\usepackage{textcomp}
\usepackage{stfloats}
\usepackage{url}
\usepackage{verbatim}
\usepackage{graphicx}
\usepackage{cite}
\usepackage{breqn}
\usepackage[noend]{algpseudocode}
\usepackage{textcomp}
\usepackage{xcolor}
\usepackage{bm}
\usepackage{bbm}
\usepackage{xcolor}
\usepackage{accents}
\usepackage{dsfont}
\usepackage{lipsum}
\usepackage{multirow, booktabs}
\usepackage{etoolbox}
\usepackage{amsthm}
\usepackage[english]{babel}
\usepackage{bbm}

\newtheorem{theorem}{Theorem}

\hyphenation{op-tical net-works semi-conduc-tor IEEE-Xplore}
\usepackage[pdftex]{hyperref}

\newcommand{\ubar}[1]{\underaccent{\bar}{#1}}

\def\BibTeX{{\rm B\kern-.05em{\sc i\kern-.025em b}\kern-.08em
    T\kern-.1667em\lower.7ex\hbox{E}\kern-.125emX}}
    
% updated with editorial comments 8/9/2021
%\def\anonymized{0}
    
\begin{document}

\title{Efficient Interpretable Nonlinear Modeling for Multiple Time Series
%\if\anonymized1
%\else
\thanks{The work in this paper was supported by the SFI Offshore Mechatronics grant 237896/O30 and the IKTPLUSS INDURB grant 270730/O70 from the Norwegian Science Foundation, and the VALIDATE project grant 101057263 from the EU HORIZON-RIA. Part of this work was presented in INTAP 2021 \cite{luism} and ISMODE 2022 \cite{roy2022joint}}
%\fi
}
\author{
     \IEEEauthorblockN{Kevin Roy\IEEEauthorrefmark{1}, 
     Luis Miguel Lopez-Ramos\IEEEauthorrefmark{2}, 
     and Baltasar Beferull-Lozano\IEEEauthorrefmark{1}\IEEEauthorrefmark{2}%3
     \\
     \IEEEauthorblockA{\IEEEauthorrefmark{1}WISENET Center, Department of ICT, University of Agder, Grimstad, Norway}
     \\
     \IEEEauthorblockA{\IEEEauthorrefmark{2}
     %HOST Department, 
     Simula Metropolitan Center for Digital Engineering, Oslo, Norway}
     %\\
     %\IEEEauthorblockA{\IEEEauthorrefmark{3}SIGIPRO Department, Simula Metropolitan Center for Digital Engineering, Oslo, Norway}
     \\
 {kevin.roy@uia.no; luis@simula.no; baltasar.beferull@uia.no; baltasar}@simula.no}
}

        % <-this % stops a space
% \thanks{This paper was produced by the IEEE Publication Technology Group. They are in Piscataway, NJ.}% <-this % stops a space
% \thanks{Manuscript received April 19, 2021; revised August 16, 2021.}

% The paper headers
% \markboth{Journal of \LaTeX\ Class Files,~Vol.~14, No.~8, August~2021}%
% {Shell \MakeLowercase{\textit{et al.}}: A Sample Article Using IEEEtran.cls for IEEE Journals}

% \IEEEpubid{0000--0000/00\$00.00~\copyright~2021 IEEE}
% Remember, if you use this you must call \IEEEpubidadjcol in the second
% column for its text to clear the IEEEpubid mark.

\maketitle

\begin{abstract} % need changing current abstract is currently giving priority to formulation B
Predictive linear and nonlinear models based on kernel machines or deep neural networks have been used to discover dependencies among time series. This paper proposes an efficient nonlinear modeling approach for multiple time series, with a complexity comparable to linear vector autoregressive (VAR) models while still incorporating nonlinear interactions among different time-series variables. The modeling assumption is that the set of time series is generated in two steps: first, a linear VAR process in a latent space, and second, a set of invertible and Lipschitz continuous nonlinear mappings that are applied per sensor, that is, a component-wise mapping from each latent variable to a variable in the measurement space. The VAR coefficient identification provides a topology representation of the dependencies among the aforementioned variables. The proposed approach models each component-wise nonlinearity using an invertible neural network and imposes sparsity on the VAR coefficients to reflect the parsimonious dependencies usually found in real applications. To efficiently solve the formulated optimization problems, a custom algorithm is devised  
%by using Lagrangian duality, more specifically, by first dualizing the non-convex constraints (which enforces mean zero and variance one in the latent space) and then 
combining proximal gradient descent, stochastic primal-dual updates, and projection to enforce the corresponding constraints.
%for the remaining constraints. 
Experimental results on both synthetic and real data sets show that the proposed algorithm improves the identification of the support of the VAR coefficients in a parsimonious manner while also improving the time-series prediction, as compared to the current state-of-the-art methods.
\end{abstract}

\begin{IEEEkeywords}
Vector autoregression, Topology identification, Granger causality, Interpretability, Invertible neural network.
\end{IEEEkeywords}

\section{Introduction}
\label{introduction}
In many engineering fields, such as financial engineering, signal analysis from sensor networks, brain signal processing, and interconnected systems in water networks and the oil and gas sector, to mention a few, determining the dependencies among several interconnected systems is an important task. Many of these scenarios include the measurement and storage of several time series, often obtained from sensors that are associated with other sensor variables of the same underlying physical process being observed. Such relationships may be represented as a graph structure that consists of nodes and edges, where each node represents a time series, and the edges or arcs between nodes typically represent a function expressing the dependency between the time series associated with the two connected nodes.

Note that such large-scale systems can become very complex in terms of the number of dependencies between different sensors. The set of relationships between them, usually referred to as the “topology” of the sensor network, can also be interpreted by human operators and can vary depending on the various control actions happening in the system. The methods for learning these dependencies are of considerable significance \cite{giannakis2018topology}. The interdependencies between different sensor variables are often modeled using a graph representation \cite{dong}, which is helpful for tasks such as prediction \cite{zaman2020online}, change point detection \cite{lopez2018dynamic}, and data compression \cite{chatarjee}, among others.

Within the plethora of methods that have been proposed to identify dependencies between interconnected systems, Granger causality (GC)~\cite{grangerc} is a widely used paradigm. The GC quantifies the degree to which the history of one time series helps predict the future of another time series. More specifically, a time series is said to be Granger-caused by another if the optimal prediction error of the former decreases when the history of the latter time series is considered \cite{tirsobakth}. There are alternative causality definitions based on the vector autoregressive (VAR) model, which represents interactions between variables with linear or nonlinear functions~\cite{sparsitybasu,tank2021neural,grangercausalitygoebel}.

The VAR model has been proven useful in multiple applications involving topology identification \cite{giannakis}. VAR causality is determined from the support of VAR matrix parameters and is equivalent to GC under certain conditions  \cite{tirsobakth}. In the case of a linear VAR, \cite{tirsobakth}, the previous time samples of one time series have an impact on the future of the other series that is modeled as a linear equation representing a causal linear filter. The causality estimates in VAR models can be made scalable to high-dimensional settings using regularizers that enforce sparsity over the VAR parameters\cite{sparsitybasu}.

Other linear models, such as structural equation models (SEM) and structural VAR (SVAR) models, are often utilized to learn linear causal dependencies among connected time series \cite{giannakis}. SEM does not take into account temporal dependencies, while VAR and SVAR both capture delayed interactions. Topology identification in linear VAR models has been extensively researched \cite{giannakis2018topology,tirsobakth,ioannidis2019semiblind}.
% In \cite{zaman2020online}, an online approach is suggested for estimating linear VAR coefficients from streaming time series data. However,

In real-world applications, such as brain networks and industrial sensor data networks, employing linear models may result in inconsistent assessments of causal relationships \cite{tank} because the underlying physical process might have nonlinear interactions. Investigation of nonlinear models is a growing area of research since linear models often struggle to capture nonlinear relationships or dependencies.

Although there is a large body of research on nonlinear causal discovery\cite{Marinazzo,stephan,shen2018online,money2021online,rohanjoshin,shen2019nonlinear}, only a small number of studies \cite{tank2021neural,bussmann2020neural} have successfully used Deep Learning (DL) to identify causal relationships in time series. Deep neural networks are used to model temporal dependencies and interactions between the variables under the GC framework. Regarding nonlinear extensions to the VAR model, functions in reproducing kernel Hilbert spaces (RKHS) are used in \cite{shen2018online,money2021online} to identify nonlinear dependencies by mapping variables to a higher-dimensional Hilbert space where dependencies are linear. Theoretically, DL methods enable the modeling of nonlinear causal interactions \cite{tank2021neural}, providing high expressive power, but their flexibility has a drawback: since DNNs, in general, are black-box approximators, it makes it more challenging to comprehend and interpret the causal links that are learned, despite being the main goal of causal structure learning. In addition, these techniques are typically computationally expensive. 

% The work in \cite{asilomar} proposes a learning technique based on feed-forward invertible NNs to take nonlinearities into account. The fundamental premise of the model is that a set of time series is produced using a VAR process in a latent space and that each time series is then observed using a nonlinear, component-wise, monotonically increasing (and thus invertible) function. The learning algorithm suggested in \cite{asilomar} is not compared with the existing algorithms based on DL \cite{tank2021neural, NVAR_DL} and it is slower because the inverse of the nonlinear function must be determined after each iteration.

This work proposes a method that enables interpretable modeling of nonlinear interactions using feed-forward invertible neural networks (INNs) as the main tool to take nonlinearities into account. The fundamental premise of the proposed model is that a set of time series is assumed to be generated by a VAR process in a latent space and that each time series is then observed using a nonlinear, component-wise, monotonically increasing (thus invertible) function represented by an INN. It avoids the black-box nature of many DL-based architectures. We impose sparsity-inducing penalties on the VAR coefficients to improve interpretability and enhance the capacity to manage limited data in the high-dimensional scenario. In this paper, we detail two different formulations with two different levels of complexity.

Linear VAR-causality is often used as the modeling tool to test for GC \cite{lutkepohl2005}. The notion of causality that this paper works with is based on the linear interactions in the latent space, as will be detailed in Sec.~\ref{preliminaries}. Due to the invertible nonlinearities, there is a one-to-one correspondence between variable values in the measurement and latent spaces, and therefore when a causal connection is identified in the linear model in the latent space, it can be deemed present with the same strength between the corresponding pair of variables in the measurement space. 

% The strategy for identifying topology for a set of signals is modelling in a specific way that will be described in Sec. \ref{section_2_modelling} based on that assumption two different estimation problems will be formulated and named as formulation A and formulation B. The importance difference between two algorithms is that:

% \begin{itemize}
%   \item The function inverse approach does not appear in formulation B.
%   \item Additional non-convex, stochastic constraints are enforced in formulation B ensuring mean zero and variance one in the variables inside the latent space.
%   \item Non linear function parametrisation in formulation A and B is different.
%   \item The optimization problem for formulation B is solved using Lagrangian duality, More specifically, the non-convex constrains in formulation B are dualized and solved by dual gradient descent.
% \end{itemize}
% The important difference is that the inverse function does not appear in the optimization problem. Two additional non-convex constraints are enforced ensuring proper mean zero and variance one in the variables inside the latent space. The optimization problem is solved  using Lagrangian duality. More specifically, we dualize the non-convex constraints. The remaining constraints can be easily dealt with by a technique that combines proximal gradient descent (to update VAR parameters) and projected gradient descent to take care of other constraints associated with the nonlinear invertible Lipschitz continuous function. 

The first algorithm explicitly uses the inverse, having a fitting cost function based on the prediction error in the sensor signal domain. On the other hand, the second algorithm does not require the inverse calculation, having a cost function based on the prediction error in the latent space, which will be proven to be a bound on the former cost function. The second algorithm has lower computational complexity than the first algorithm, requiring constant memory needs for each iteration, making it suitable for sequential and big-data or high-dimensional scenarios. 

We also empirically validate the performance of these two algorithms, and compare it with currently existing DL-based nonlinear models through extensive tests on synthetic and real data sets.
First, simulations are carried over synthetically-generated signals, namely a nonlinear VAR (matching the modeling assumption) for different values of the lag order $P$, and data generated by the nonlinear Lorenz-96 model \cite{lorentz96} for different values of the force constant $F$, showing that our interpretable approach identifies the graph of nonlinear interactions. Finally, we also evaluate the performance of our methods using real data from a sensor network from a use case in the offshore oil and gas industry. 

The contributions of the present paper can be summarized as follows:\begin{itemize}
    \item A comprehensive description of the proposed modeling assumption that allows inference of nonlinear dependency graphs among any set of time series.
    \item Design of an inference algorithm based on explicit inversion of the functions mapping between the latent and measurement space (formulation A). %Parts of this formulation have been published in previous conference papers, and a complete description is presented here to make the paper self-contained.
    \item A theoretical result stating under which conditions the prediction MSE in the latent space is an upper bound of the prediction MSE in the measurement space, motivating the formulation of an alternative algorithm.
    \item Derivation of an inference algorithm based on MSE minimization in the latent space (formulation B) which addresses the same modeling assumption and is computationally more efficient. 
    \item Experimental results validating both proposed algorithms, establishing that formulation B outperforms formulation A, and comparing their prediction and topology-identification performance against state-of-the-art GC inference algorithms based on DL.
\end{itemize}

% Our implementation is available online:\href{https://github.com/kevinroy007/NLVAR_proximal_descent}{https://github.com/kevinroy007/NLVAR Proximal Descent}

The conference versions of this work present a preliminary version of formulation A with the derivation of the necessary gradients via implicit differentiation in \cite{luism}, and incorporating sparsity-enforcing regularization (including numerical results showcasing its impact on topology identification) in \cite{roy2022joint}. 

The rest of the paper is organized as follows: Sec. \ref{preliminaries} introduces background on linear andnonlinear topology identification. Sec. \ref{modelling_assumption} describes the modeling assumption in detail. Sec. \ref{Modelling_problem_formulation} describes the two formulations and the algorithms to solve them. Sec. \ref{Simulation_experiments} contains simulation and experiments on real and synthetic data sets comparing the strength of our algorithms with other state-of-the-art methods. Finally, Sec. \ref{Simulation_experiments} concludes the paper.

\section{Preliminaries}
\label{preliminaries}
After outlining the notion of linear causality graphs, this section reviews how these graphs can be identified by formulating an optimization problem. Then, the basics of the nonlinear causality graphs problem are described.
\subsection{Linear causality Graphs}
Consider a collection of $N$ sensors providing $N$ time series $\left\{y_n[t]\right\}_{n= 1}^N, \, t = 0,1, \ldots, T$, $t \in \mathbb{Z}$, where $y_n[t]$ denotes the measurement of the $n^{th}$ sensor at time $t$. A causality graph $\mathcal{G} \triangleq(\mathcal{V}, \mathcal{E})$ is a directed graph where the $n^{th}$ vertex in $\mathcal{V}=\{1, \ldots, N\}$ is identified with the $n^{th}$ time series ${\left\{y_n[t]\right\}}_{t=0}^T$ and there is a directed edge from $n^{\prime}$ to $n$ (i.e. $\left(n, n^{\prime}\right)\in \mathcal{E}$ ) if and only if ${\left\{y_{n^{\prime}}[t]\right\}}_{t=0}^T$ causes ${\left\{y_n[t]\right\}}_{t=0}^T$. The notion of causality that we deal with in this work is VAR-causality, which is equivalent to GC under certain conditions, and it is easy to obtain from a VAR model. A $P^{th}$-order linear VAR model can be formulated as

\begin{align} \label{eq:var_matrix}
      y[t]=\sum_{p=1}^{P} A^{(p)} y[t-p]+u[t],  \quad \quad P \leq t \leq T
\end{align}
where $y[t]=[y_1[t],\dots,y_N[t]]^T$, $A^{(p)}\ \in\ R^{N\times N}$ and $p = 1,\dots, P$, are respectively the matrices of VAR parameters, $T$ is the observation time period, and $u[t]={[u}_1[t],\dots,u_N[t]]^\top$ is a vector innovation process typically modeled as a Gaussian, temporally-white random process. Letting $a_{n,n^\prime}^{(p)}$ denote the $(n,n^\prime)$ entry of the matrix $A^{(p)}$, \ref{eq:var_matrix} takes the form:
% IT MAY BE A GOOD IDEA LATER TO REDUCE A BIT THE SPACING
 \begin{align} \label{eq:var_scalar}
       y_n[t] = &\sum_{n^\prime=1}^{N}\sum_{p=1}^{P}{a_{n,n^\prime}^{(p)}y_{n^\prime}}[t-p]+\ u_n[t], \quad P \leq t \leq T \\
        = &\sum_{n^{\prime} \in \mathcal{N}(n)} \sum_{p=1}^P a_{n, n^{\prime}}^{(p)} y_{n^{\prime}}[t-p]+u_n[t]
	\end{align}
for $n = 1,\ldots, N$.  where  $\mathcal{N}(n) \triangleq\left\{n^{\prime}: a_{n, n^{\prime}} \neq 0_P\right\} $ and $a_{n,n^\prime}  = [a_{n,n^\prime}^{(1)},....,\ a_{n,n^\prime}^{(p)}]^{T}$ is the impulse response from node $n^\prime$ to node $n$; this will be a zero vector when there is no edge from node $n^\prime$ to node $n$. Thus, $\left\{y_{n^{\prime}}[t]\right\} \text { VAR-causes }\left\{y_n[t]\right\} \text { if } a_{n, n^{\prime}} \neq 0_P $. 
It therefore holds that the set of directed edges is $\mathcal{E} \triangleq\left\{\left(n, n^{\prime}\right): \boldsymbol{a}_{n, n^{\prime}} \neq \mathbf{0}_P\right\}$, and the in-neighborhood of node $n$, denoted as $\mathcal{N}(n)$, contains all the nodes causing (having a non-zero impulse response connected towards) node $n$.

The problem of identifying a linear VAR causality model boils down to estimating the VAR coefficient matrices $\{A^{(p)}\}_{p=1}^P$ given the observations $\{y[t]\}_{t=0}^{T-1}$. 
To quantify the strength of these dependencies, a weighted graph can be constructed by assigning e.g. the weight $\left\|\boldsymbol{a}_{n, n^{\prime}}\right\|_2$ to the edge $\left(n, n^{\prime}\right)$.

%Identifying a linear VAR model thus reduces to determining the VAR coefficient matrices $\{A^{(p)}\}_{p=1}^P$. Such 
The VAR coefficients can be learned by solving a minimization problem with a least-squares loss. Moreover, models with a reduced number of nonzero parameters entail a reduced number of edges are preferable as they are more parsimonious, motivating the following sparsity-enforced optimization problem with a Lasso-type penalty \cite{high_dim_lozano}:
\begin{align} \label{eq:linearvar}
\min_{\{A^{(p)}\}_{p=1}^P} & \sum_{t=P}^T\left\|y[t]-\left(\sum_{p=1}^P A^{(p)} (y[t-p])\right)\right\|_2^2 \nonumber \\
& +\lambda \sum_{p=1}^P  \sum_{n=1}^N \sum_{n^{\prime}=1}^N\left|{a_{n, n^{\prime}}}^{(p)}\right|
\end{align}
where $|.|$ denotes the absolute value. The hyper-parameter $\lambda>0$ controls the level of sparsity enforced by the $l_1$ norm of the coefficients. The objective function \eqref{eq:linearvar} is non-differentiable which will be considered when designing the iterative algorithms to solve this problem, as we explain in Sec \ref{Modelling_problem_formulation}.

\subsection{Nonlinear modeling}

As stated in Sec \ref{introduction}, time-series collections in many practical applications usually exhibit nonlinear interactions, thus a linear VAR model is insufficient for capturing the nonlinear data dependencies. In the most general nonlinear case, VAR models are not capable of identifying nonlinear dependencies, and their prediction error in real-world scenarios is high. Each data variable $y_n[t]$ can be represented as a nonlinear function of multiple multivariate data time series as follows:
     \begin{align} \label{eq:nonvar}
	y_n[t] =   h_n(y_{t-1},\ldots, y_{t-P}) + u_n[t],
	\end{align}
    where $y_{t-p} = [y_1[t-p],y_2[t-p], \dots,y_N[t-p]]^{\top}$, $p \in [1,P]$ and $h_n(\cdot)$ is a nonlinear function.

% \textcolor{red} {A causality graph corresponding to such a set of nonlinear equations can be defined such that...}
    
    % The optimization problem hence can be formulated as the following.  
   
    % \begin{align} \label{eq:nonlinearvaropt}
    % \min_{\bm h,{\{A^{(p)}\}_{p=1}^P}} & \sum_{t=P}^T\left\|y[t]-\bm{h}\left(\sum_{p=1}^P A^{(p)} (y[t-p])\right)\right\|_2^2 \\
    % & +\lambda \sum_{p=1}^P  \sum_{n=1}^N \sum_{n^{\prime}=1}^N\left|{a_{n, n^{\prime}}}^{(p)}\right|
    % \end{align}

    % The objective function \eqref{eq:nonlinearvaropt} is  non-convex and non-differentiable. \eqref{eq:nonlinearvaropt}is a functional optimization problem where the optimization variable includes both VAR and nonlinear function ($h$) parameters.
    
The model in \eqref{eq:nonvar} has two main drawbacks: the first one is that there are infinitely many nonlinear functions that can fit a finite set of data points. The second one is that, even if $h_n(\cdot)$ could be identified, there is no clear criterion in the literature to determine an interpretable  graph that allows us to identify which key variables are affecting another variable from such a set of nonlinear functions. In Sec. \ref{modelling_assumption} we present the nonlinear model that we consider to circumvent the aforementioned drawbacks.

\section{Interpretable Nonlinear Model}
\label{modelling_assumption}
In this work, we are restricting  the nonlinear function to be learned to  belong to a subset of possible nonlinear functions which comes in between the linear model and the general nonlinear model in terms of complexity.

We aim to design an interpretable nonlinear model. Notice that the linear VAR model is interpretable because its coefficients represent a notion of additive influence of each variable on any another as it can be seen in \eqref{eq:var_matrix}. Since we seek a model having the advantage of identifying dependencies, our model should have a structure resembling that of a VAR model. Linearity renders VAR models not capable of identifying nonlinear dependencies, and their prediction error in real-world scenarios is high. Therefore, the desiderata here is a model which gives low prediction error as compared to linear models while retaining interpretability.

\begin{figure}[ht]
%\vspace{-0.6cm}
\hspace{-0.6cm}
\includegraphics[width=1.2\columnwidth]{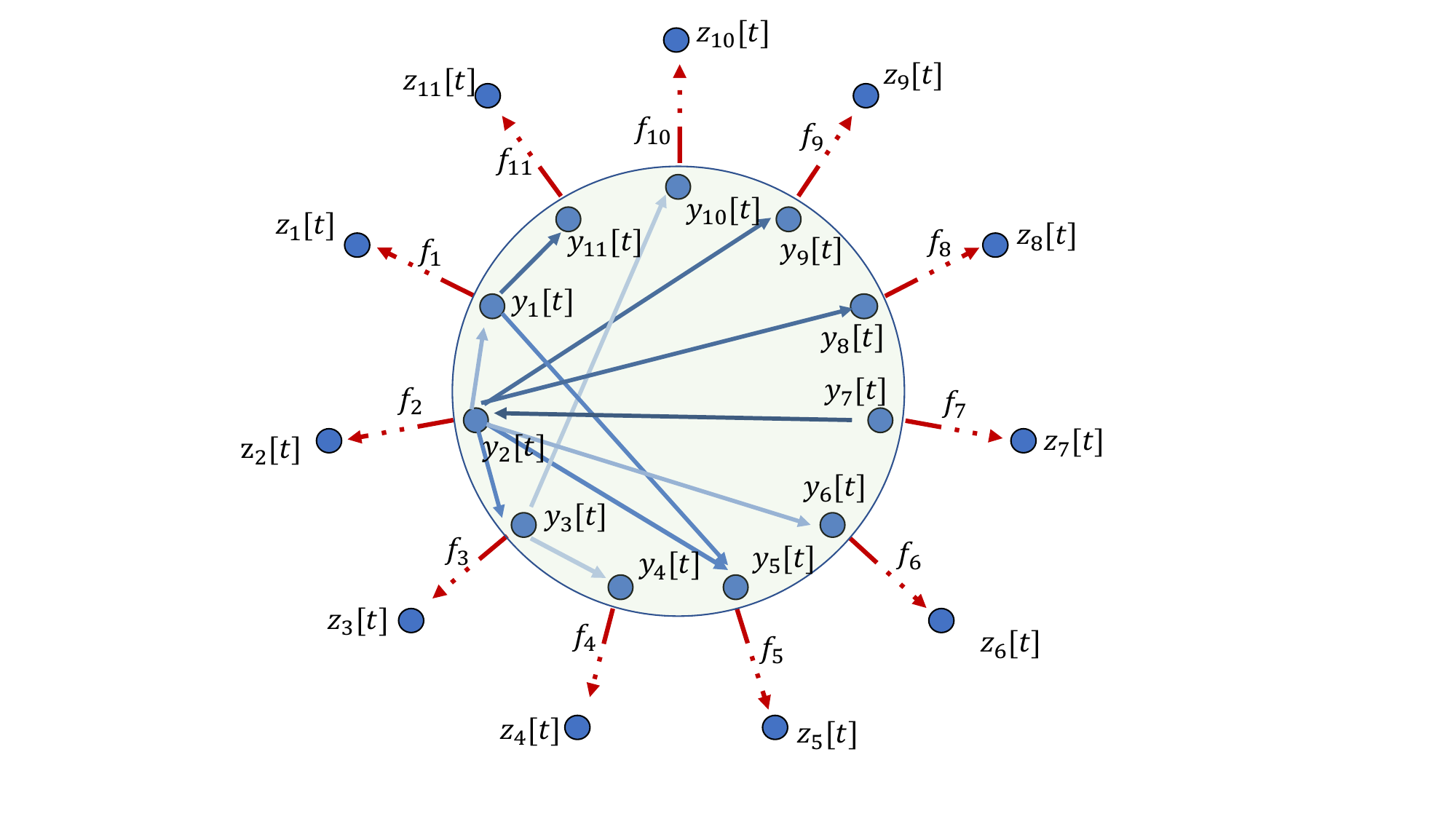}
\caption{{Causal dependencies are assumed linear in the latent space (green circle). In this model, the available sensor data corresponds to the output of the nonlinear functions $\{f_i\}_{i=1}^N$. }%\vspace{-0.3cm}\cite{luism}
} 
\label{fig:model_architecture}
\end{figure}
To achieve this, we propose a modeling assumption stating that a collection of time series is generated through a VAR process in a latent space, and then each time-series $\left\{z_i[t]\right\}$ is observed in a measurement space through a per-sensor nonlinear, monotonically increasing (and thus invertible) function ($f_i$) connecting $\left\{y_i[t]\right\}$ with $\left\{z_i[t]\right\}$. Each nonlinear function $f_i$  associated with each time series $z_i[t]$ is generally different. The concept is depicted in Fig. \ref{fig:model_architecture}: the green circle represents the  vector space where the latent variables lie, among which the dependencies are linear. The area outside the circle represents the space where the actual sensor measurements $\{z_i[t]\}$ lie. The blue lines represent the linear dependencies between time series in the latent space. Darker blue lines depict stronger dependencies between pairs of sensors. The red line from each time series or sensor represents the corresponding measurement space transformation. 

Let $f:\mathbb{R^N} \rightarrow \mathbb{R^N}$ denote a vector function such that 
$[ f(x)]_i = f_i(x_i)$ where $f_i$ is the nonlinear function associated with each sensor. With this definition, a collection of nonlinearly related time series is assumed to be generated as
\begin{equation}
\label{eq:zfromy}
    z[t] =  f(y[t]),
\end{equation}
where $y[t]$ is generated according to \eqref{eq:var_matrix}.

Since there is a one-to-one mapping between $z[t]$ and $y[t]$ (defined by the bijective mapping $\bf f$), we can say that if $y_n[t]$ VAR causes $y_m[t]$, then clearly $z_n[t]$ causes $z_m[t]$. Moreover, given the nonlinearity of each per-sensor mapping, the latter dependency is nonlinear. The structure of the nonlinear dependency graph among the signals in $z[t]$ is the same as that for the signals in $y[t]$. Therefore, the modeling assumption introduced in this section allows a criterion for inferring a nonlinear causality graph among any set of time series.

Once the model for the generative process is specified, we can express  the problem statement as follows:
Given a set of sensor measurement data given by multiple time series $z[t]$ in a given time interval $[0, T]$, our goal is to identify the linear parameters $\{a_{n,n^\prime}^{(p)}\}$ in the latent space and the vector nonlinear function $f$. In Sec. \ref{Modelling_problem_formulation}, we formally describe the problem formulation and the techniques to infer the aforementioned parameters.

\section{Problem formulation and algorithm design}
\label{Modelling_problem_formulation}
Here, we provide a rigorous problem formulation and the design of algorithms under the modeling assumption described in Sec. \ref{modelling_assumption} resulting in a complexity that is comparable to that of the linear VAR model, while accounting for nonlinear interactions. We consider two different problem formulations; while the direct approach described in Sec. \ref{section_1_modelling} is relatively straightforward, the one in Sec. \ref{section_2_modelling} is advantageous in terms of computation and accuracy.

% Sec. \ref{section_1_modelling} presents a complete overview of the algorithm outlined in the conference paper \cite{luism}, which will be referred to as "formulation A"; and Sec. \ref{section_2_modelling} explains how we derive a more parsimonious, fast, and accurate algorithm from our previous version, labeled as "formulation B".  

The problem statement at the end of Sec. \ref{modelling_assumption} requires learning nonlinear functions, and in order to do that, it is necessary to parameterize the functions. The parameterization of nonlinear transformations is different in Sec. \ref{section_1_modelling}  and Sec. \ref{section_2_modelling}.

\subsection{Explicit function inversion-based inference }
\label{section_1_modelling}
A first approach can be based on inferring the nonlinear function parameters by formulating an optimization problem that directly penalizes the difference between the predicted and actual values  of the time series in the measurement space.

The problem can be expressed as follows: given a total of $T$  observations $\{{z}[t]\}_{t=0}^{T-1}$ from the time series, learn the nonlinear transformation $f$ and the parameters $\{A^{(p)}\}_{p=1}^P$ of the underlying linear model in the latent space.

To infer $f$, each $f_i$  is parameterized as a NN layer with $M$ units indexed by $j$ representing the function:
\vspace{-2mm}
\begin{align} 
\label{eq:f_model}
    f_{i}\left(y_{i}\right)= b_{i} + \sum_{j=1}^{M} \alpha_{ij} h\left(w_{ij}y_{i}-k_{ij}\right)
    \vspace{-3mm}
\end{align}
Where $h(\cdot)$ is a monotonically increasing activation function, for example, a sigmoid function; and the parameters to be learned: $\{\alpha_{ij}, w_{ij}, k_{ij}\}_j, b_{i}$ are collected in the vector $\theta_i$: 
{
\small $$\theta_{i}=\left[\begin{array}{l}\alpha_{ i} \\  w_i\\ k_{ i} \\ b_{i}\end{array}\right] \text{ and } 
\alpha_{i},w_{i}, k_{i}=\left[\begin{array}{c}
\alpha_{i 1} \\
\alpha_{i 2} \\
\vdots \\
\alpha_{i M}
\end{array}\right],\left[\begin{array}{c}
w_{i 1} \\
w_{i 2} \\
\vdots \\
w_{i M}
\end{array}\right],\left[\begin{array}{c}
k_{i 1} \\
k_{i 2} \\
\vdots \\
k_{i M}
\end{array}\right]_. $$
}
The parameters of $f$ are in turn collected in the vector $\theta = [\theta_1^\top, \theta_2^\top, \dots \theta_N^\top]^\top$.

For each function $f_i$ to be monotonically increasing, which guarantees invertibility, it suffices to ensure that $\alpha_{ij}$ and  $w_{ij}$ are positive for all $j$. The pre-image of $f_i$ is $\mathbb{R}$, but the image is an interval $(\ubar{z}_i, \bar{z}_i)$, which is in accordance with the fact that sensor data are usually restricted to a given dynamic range. If the range is not available a priori but sufficient data is available, bounds for the operation interval can also be easily inferred. 

In order to express how an entry of a time series is predicted from the previous values, let $g_i$ denote the inverse of $f_i$, that is, $y[t] = g(z[t])$ and let $ g:\mathbb{R}^N \rightarrow \mathbb{R}^N$ denote a vector function such that $[\bm g(x)]_i = g_i(x_i)$. Then, \eqref{eq:zfromy} and \eqref{eq:linearvar} imply that
\begin{equation}
\label{eq:zfromz}
z[t] = \bm f\bigg(\sum_{p=1}^{p} A^{(p)}\bm g(z[t-p]) + u[t]\bigg).
\end{equation}
Notice that, in general, there is no closed form for the inverse function $g_i$; however, it is possible to compute it efficiently via a numerical method such as bisection.

The optimization problem for joint learning of $\bm f$ and the VAR parameters is formulated as follows and will subsequently be referred to as \textbf{Formulation A}:

\begin{subequations}
\label{eq:optimization_problem_1}
\begin{align}
    \nonumber
    \min_{\bm f, \{A^{(p)}\}_{p=1}^P} \;\;
    & \sum_{t=P}^T
    \left\| {z}[t]-\bm f\Big(\sum_{p=1}^{p} A^{(p)}\bm g(z[t-p])\Big) \right\|_{2}^{2}
    \\
    & + \lambda \sum_{p=1}^{P}\sum_{n=1}^{N} \sum_{n^{\prime}=1}^{N}\left|a_{n, n^{\prime}}^{(p)}\right| \label{eq:objective_function}
    \\
    \textrm{s. to:}\;\  
    & \sum_{j=1}^{M}\alpha_{i j} 
      = \bar{z}_i %max(z_{i}[t-p])
        - \ubar{z}_i %min(z_{i}[t-p]) 
        \; \; \forall i
        \label{eq:constraint_alpharange}
    \\
    & b_{i} = \ubar{z}_i%min(z_{i}[t-p]) 
        \; \; \forall i
        \label{eq:constraint_barz}
    \\
    & \alpha_{i j} \geq0 
        \; \; \forall i, j
        \label{eq:constraint_alphapos}
        \\
    & w_{i j} \geq0 
        \;\; \forall i, j
        \label{eq:constraint_wpos_1}
\end{align}
\end{subequations}
% \textcolor{red}{comment about the max and min saying that the function f saturates at z bar and  below if you know a priori you impose  directly or else infer from the data}     

% This formulation is similar to  the problem proposed in \cite{luism} with the important difference that a sparsity-promoting penalty is added to the objective, 

The objective function \eqref{eq:objective_function} is a least-squares criterion with a  Lasso regularizer term  over the adjacency coefficients to enforce sparsity in the resulting graph. Here, the hyper-parameter $\lambda$ regulates how sparse the solution is. Notice that this objective function \eqref{eq:objective_function} is non-convex (because it involves composition with $\bm f$ which is non-convex in general) and non-differentiable due to the $l_1$ norm in the Lasso term.
%The objective function in \eqref{eq:optimization_problem_1} is nondifferentiable, but 
It can be split as $\sum_{t=P}^T C(\left\{A^p\right\}, \theta, t) + q(A^p)$, where
\begin{align}
\label{calculatinga1}
    C\left(A^p, \theta, t\right) =\;\;& \left\| {z}[t]-\bm f\Big(\sum_{p=1}^{p} A^{(p)}\bm g(z[t-p])\Big) \right\|_{2}^{2}
\end{align}
is differentiable, and
\begin{align} 
\label{calculatingq}
   q(A^p) = \lambda\sum_{p=1}^{P} \sum_{n=1}^{N} \sum_{n^{\prime}=1}^{N}\left|a_{n, n^{\prime}}^{(p)}\right|
\end{align}
is not, which motivates the use of proximal algorithms.
%Here $l(A,\theta)$ is differentiable and $q(A)$ is not differentiable. 

The constraints \eqref{eq:constraint_alpharange}, \eqref{eq:constraint_barz} ensure that the image of each  $f$ is in the corresponding sensor dynamic range, and constraints \eqref{eq:constraint_alphapos} and \eqref{eq:constraint_wpos_1}) ensure the invertibility of $f_i$. 
We solve the optimization problem \eqref{eq:optimization_problem_1} stochastically by a technique that combines proximal gradient descent and projected gradient descent. Specifically, the regularization term in the second summand can be  tackled with a proximal parameter update, and the constraints \eqref{eq:constraint_alpharange}, \eqref{eq:constraint_barz}, \eqref{eq:constraint_alphapos} and \eqref{eq:constraint_wpos_1} can be enforced by projection.

The parameter updates are derived as follows. Note that the Lasso penalty only affects the VAR parameters. Thus each $a_{n n^{\prime}}^{(p)}$ is updated iteratively by a proximal update, whereas the parameters $\theta$ are updated by a  gradient step. Letting $t(k)$ denote the time instant used at iteration $k$, we can write the following updates:
\begin{subequations}
\label{eq:calculating_a_and_theta}
\begin{align} 
\label{calculatinga}
 a_{n n^{\prime}}^{(p)(k+1)}&= \operatorname{prox}_{q, \eta}\left({a_{n n^{\prime}}^{(p)(k)}}-\eta \bigg(\frac{d C(A^p, \theta, t(k))}{d {a_{n n^{\prime}}^{(p)}}}\bigg)^\top \right)
 \\
 \label{calculatingtheta}
 \theta_i^{(k+1)} &= \theta_i^{(k)} - \eta \bigg( \frac{d C(A^p, \theta, t(k))}{d \theta_{i}}\bigg)^\top.
\end{align}
\end{subequations}
Note that $q$ in $prox_{q,\eta}$ corresponds to the function defined in \eqref{calculatingq}
and the proximity operator in \eqref{calculatinga} is given by:
\begin{equation}
    \operatorname{prox}_{q, \eta}\left(x\right) = x \left[ 1 - \frac{\eta \lambda}{|x|}\right]_+
\end{equation}
where $ [x]_+ := \max(0,x) $, yielding the well-known soft-thresholding operator \cite{daubechies2004iterative}.

After each parameter update, the NN parameters are projected back onto the feasible set, according to the equation
\begin{subequations}
\label{eq:projection4}
\begin{align}
    \Pi_{S}\left(\theta^{(k)}\right)=\arg \min_{\theta} & \left\|\theta-\theta^{(k)}\right\|_{2}^{2}
\\
\textrm{s. to:}  &
 ~\eqref{eq:constraint_alpharange}, \eqref{eq:constraint_barz}, \eqref{eq:constraint_alphapos}, \eqref{eq:constraint_wpos_1} 
 \end{align}
\end{subequations}
This is a case of projection onto a simplex which is tackled using the projection algorithm in \cite{blondel2014large}.
 
The proximal parameter update requires the computation of the gradient of $C(A^p, \theta, t)$ w.r.t. $ A^p \text{ and } \theta$.
The forward equations can be written as:
\begin{subequations}
\begin{align}
    %\begin{equation}  
    \label{eq:forward_g}
        \tilde{y}_{i}[t-p] =&g_{i}\left(z_{i}[t-p], \theta_{i}\right)
    %\end{equation}
    \\
    %\begin{equation} 
    \label{eq:forward_A}
        \textstyle \hat{y}_{i}[t] =&\sum_{p=1}^{p}\sum_{j=1}^{n}  a_{i j}^{(p)} \tilde{y}_{j} [t-p] 
    %\end{equation}
    \\
    %\begin{equation} 
    \label{eq:forward_f}
        \hat{z}_{i}[t] =&f_{i}\left(\hat{y}_{i}[t], \theta_{i}\right) 
    %\end{equation}
\end{align}
\end{subequations}
where the dependency with the parameter vector $\theta_i$ has been made explicit. The remainder of this section shows the backward equations.

The main challenge to solving this problem is that there is no closed form for the inverse function $g_i$. However, an inverse function can be computed efficiently via bisection as one of the possible methods. On the other hand, automatic differentiation software cannot yield the gradient of $g_i$.
This is circumvented in \cite{luism} using implicit differentiation. To make the paper self-contained, the expressions to compute the gradient of $g_i$ are provided here and the full derivation is shown in Appendix A.
Letting $f_{i}^{\prime}\left(\hat{y}\right)= 
\frac{\partial f_{i}\left(\hat{y}, \theta_{i}\right)}{\partial \hat y}, $   and $S_n = 2(\hat{z}_n[t]-z_n[t])$, the gradient of $C(A^p,\theta,t)$ can be expressed as:

\begin{align*}
\label{derivativecost4}
 \frac{d C(A^p, \theta, t)}{d \theta_{i}}= & S_{i}\bigg(\frac{\partial f_{i}\left(\hat{y}, \theta_{i}\right)}{\partial \theta_{i}}\bigg) +
 \\
 & \sum_{n=1}^{N} S_{n}\bigg(f_{n}^{\prime}(\hat{y}_{n}[t]) \sum_{p=1}^{P} a_{n i}^{(p)} 
 \frac
    {\partial g_{i}\left(z_{i}[t-p],\theta_{i}\right)}
    {\partial \theta_{i}}\bigg)_\cdot
\end{align*}

where

$$\frac{\partial f_i\left(\hat{y}, \theta_{i}\right)}{\partial \theta_{i}} %\text{ in equation } \eqref{derivativecost4} 
= \left[
\frac{\partial f_i\left(\hat{y}, \theta_{i}\right)}{\partial \alpha_{i}} 
\frac{\partial f_i\left(\hat{y}, \theta_{i}\right)}{\partial w_{i}} 
\frac{\partial f_i\left(\hat{y}, \theta_{i}\right)}{\partial k_{i}} 
\frac{\partial f_i\left(\hat{y}, \theta_{i}\right)}{\partial b_{i}} \right]$$ can be obtained by analytic or automatic differentiation. %via, e.g., Pytorch \cite{NEURIPS2019_9015}. 
The gradient of the inverse function is:
$$
\frac{\partial g_i\left(z, \theta_{i}\right)}{\partial \theta_{i}}=
-\big\{f^{\prime}_i(g_i(z,\theta_{i}))\big\}^{-1}{\bigg(\left.\frac{\partial f_i\left(\tilde{y}, \theta_{i}\right)}{\partial \theta_{i}}\right\vert_{\tilde{y} = g_i\left(z, \theta_{i}\right)}\bigg)}.
$$

Finally, the gradient of $C(A^p, \theta, t)$ w.r.t. the VAR coefficient $a_{n n^{\prime}}^{(p)}$ can be readily calculated as:
\begin{equation} \label{dCwrtthetaa_main}
\frac{d C(A^p, \theta, t)}{d a_{i j}^{(p)}}=S_i f_{i}^{\prime}\left(\hat{y}_{i}[t]\right) \tilde{y}_{j}[t-p]_\cdot
\end{equation}

The detailed derivation of the above expressions is provided in Appendix A. %The challenge associated with automatic differentiation software not being able to obtain the gradient of $g_i$ is circumvented here based on implicit differentiation.

The non-convexity of problem \eqref{eq:optimization_problem_1} and the comparatively small number of parameters of the model are factors that increase the risk of falling into low-performance local minima, making the final convergence value of the parameters $\theta$ to be dependent on the initialization. On the other hand, it is expected that the model will accomplish a lower prediction error than the linear VAR model for the same training data. 

A strategy to obtain a non-linear model performing better than the optimal linear one is to initialize $f$ to resemble an identity function at the range of the input data and such that a latent prediction that falls out of the range of typical predictions translates into a measurement prediction that is close to the corresponding extreme (maximum or minimum) value observed in the training data. To this end, it is proposed to initialize $\theta$ such that
\begin{equation}\label{init}
    f_i(\hat{y}_{i}[t], \theta_i) = [\hat{y}_{i}[t]]_{\ubar{z}_i}^{\bar{z}_i}
\end{equation}
approximately holds, where $[\hat{y}_{i}[t]]_{\ubar{z}_i}^{\bar{z}_i} := \max\left(\ubar{z}_i, \min(\hat{y}_{i}[t], \bar{z}_i)\right)$.
Additionally, the latent parameters $\left\{A^p\right\}$ are to be initialized to equal the linear VAR parameters inferred from the training data with a linear VAR estimation method. As a result, the initial (before iterating) prediction error of the initial nonlinear VAR model is equal to that of the linear VAR, and the subsequent iterations (as given in \eqref{eq:calculating_a_and_theta}) will move the parameters in the direction of a solution with a smaller prediction error. Thus, the chances of finding a solution with a lower error than the linear model are increased.

In order to increase the efficiency of the algorithm and avoid an initial training of the non-linearity from a synthetic collection of data points for each of the time series, but only having one pre-trained non-linear function, we derive a set of  transformation equations from the linear model to obtain the desired nonlinearities for their different ranges. A set of transformation equations can be developed by defining a function $\check{f}$ such that  $\check{f}_i(1)$ = $f_i(1) = 1,  $ $\check{f}_i(-1)$ $ = f_i(-1) = -1$, $\check{f}_i(x)$ =  $f_i(x) = x$. Let $\check{\alpha}_i, \check{w}_i, \check{k}_i \text{ and } \check{b}_i $ be the learned parameters corresponding to $\check{f}_i$. The set of transformation equations will be such that $\check{\alpha}_i = c \alpha_i, \check{b}_i =c b_i+d, \check{w}_i = aw_i,\check{k}_i = -w_i B+k_i$ where $c = (\bar{z} - \ubar{z})/2, d = (\bar{z} + \ubar{z})/2, a = -2/(\ubar{z} - \bar{z})$ and $ B = 2\bar{z}/(\ubar{z}-\bar{z})$. The complete derivation of the set of transformation equations is shown in Appendix B. In Sec \ref{Simulation_experiments}, we show experimentally that this initialization speeds up both proposed algorithms.

The steps of the overall method described in this section are summarized in \textbf{Algorithm \ref{alg:NL-VAR_1}}.

\begin{algorithm}
   \caption{Explicit function inversion-based inference}
   \label{alg:NL-VAR_1}
\begin{algorithmic}
   \State {\bfseries Result:} $\boldsymbol{a}_{n, n^{\prime}}^{(p)}$, for $n, n^{\prime}=1, . ., N$ and $p=1,p+1, . ., P$
   \State {\bfseries Input:} data $z_i$, $\lambda$, $N$, order $P$, $M$, $T$, learning rate $\eta$.
   \State {\bfseries Initialize:} $\boldsymbol{a}_{n, n^{\prime}}^{(p)}$, $\theta_{i}$  as stated in \eqref{init}
   \For{$t=P,P+1,...,T$}
      \For{$n=1,2,...,N$} 
      \State
        Generate $ y_n[t]$  from $z_n[t]$ using $g_n$ \eqref{eq:forward_g}
        \State
        Obtain $y_n[t+1]$ using \eqref{eq:forward_A} and 
        \State
        Obtain $z_n[t+1]$ using $f_n$ \eqref{eq:forward_f}
        
        \State
        Network update: $\theta_{n} = \theta_{n}-\eta\frac{d C[t]}{d \theta_{n}}$ \eqref{calculatingtheta}
        
        \State Projection operation \eqref{eq:projection4}
        \For{$n^{\prime}=1,2,...,N$}
            \For{$p=1,2,...,P$}
            \State VAR parameter update: $ a_{n n^{\prime}}^{(p)}[t+1] $ via \eqref{calculatinga}
               
            \EndFor
        \EndFor
      \EndFor
    \EndFor
\end{algorithmic}
\end{algorithm}

\subsection{Latent prediction error minimization-based inference}
\label{section_2_modelling}
As indicated in the previous formulation, the main drawback of the algorithm is associated with the numerical computation of $ \bm g$. Evaluating he function $\bm g$ via bisection adds complexity at each run within the overall algorithm.

Next, we propose an alternative formulation to estimate a nonlinear topology, whose solution leads to a lower-complexity algorithm. The main idea of this formulation is to minimize the prediction MSE in the latent space instead of minimizing it in the measurement space. We will show that minimizing the prediction error in the latent space implies approximately minimizing the prediction error in the measurement space. This is because, as it will become clear later, under certain conditions, the latter is an upper bound of the former. The nonlinearities between measurement and latent space are parameterized here in a way different from that presented in the previous formulation. The function mapping sensor $n$ from latent space to measurement space is now denoted as $r_n$. It has the use of function $f_n$ denoted in the previous section but receives a different symbol as it is parameterized in a different way. The way $r$ is parameterized is via an explicit parameterization of its inverse (denoted by $v$), such that $y[t] = v(z[t])$. The function $v_n$ for sensor $n$ which is the inverse of $r_n$ is parameterized as follows: 

\begin{equation}
\label{l2}
v_{n}(x)=b_{n}+\gamma_{n} x +\sum_{j=1}^{M} \alpha_{n j} h\left(w_{n j}x-k_{n j}\right).
\end{equation}
Note that the way $v_n$ is parameterized is similar to the case of $f_n$ in \eqref{eq:f_model} %refer here to the equation that parameterizes f
with the addition of the linear term $\gamma_{n} x$, which together with positivity constraints in $\alpha$ and $w$, ensure that the derivative of $v_n$ is at least $\gamma$.

% Instead of parameterizing function $r$ we are parametrizing $v$ which is mapping in the inverse direction (from the measurement space to the latent space).

The optimization problem for joint learning of $\bm v$ and the VAR parameters is formulated as follows and will subsequently be referred to as \textbf{Formulation B}:
\begin{subequations}
\label{eq:optimization_problem_b}
\begin{align}
    \min_{{\left\{\left\{{A}_{p}\right\}_{p=1}^{P}, {\theta}\right\}}} \;\;
    &
   \frac{1}{T-P}\sum_{t=P}^{T}\left\|v(z[t])-\sum_{p=1}^{P} A^{(p)} v(z[t-p])\right\|_{2}^{2} \nonumber
   \\
   &+ \lambda \sum_{p=1}^{P}\sum_{n=1}^{N} \sum_{n^{\prime}=1}^{N}\left|a_{n, n^{\prime}}^{(p)}\right| \label{eq:objective_function_b}
    \\
    \textrm{s. to:}\;\  
    &  \alpha_{i j} \geq0 
        \;\; \forall i, j
        \label{eq:constraint_wpos1_b}
    \\
    & w_{i j} \geq0 
        \;\; \forall i, j
        \label{eq:constraint_wpos2_b}
         \\
    & \gamma_{i} \geq 0
        \;\; \forall i 
        \label{eq:constraint_wpos3_b}
            \\
    & \frac{\sum_{t=0}^{T-1} v_{i}(z_i[t])}{T} = 0
        \;\; \forall i
        \label{eq:constraint_wpos4_b}
         \\
    &  \frac{\sum_{t=0}^{T-1}(v_{i}(z_i[t]))^{2}}{T-1} = 1
        \;\; \forall i
        \label{eq:constraint_wpos5_b}
\end{align}
\end{subequations}

As it can be seen in the problem formulation, the prediction error is minimized in the latent space. This is justified because, as Theorem 1 shows next, the prediction MSE in the latent space is also an upper bound of the prediction MSE in the measurement space when the set of functions $\{r_c(x)\}$ are Lipschitz continuous. Therefore, minimizing in the latent space entails approximately minimizing in the measurement space.

With $\hat{z}$ and $\hat{y}$ denoting the prediction in measurement and latent space respectively, we state the following theorem.

\begin{theorem}
if $\bm r_n()$ is $L_{r_n}$-Lipschitz continuous and $z_n[t]$ and $y_n[t]$ are related as $z_n[t] = r_n(y_n[t])$, then the following bound holds:
\begin{align}
\label{eq:theorem1_bound}
\sum_{n=1}^{N}\mathbb{E}\left[\left\|\hat{z}_{n}[t]-z_{n}[t]\right\|_{2}^2\right] \nonumber
\\
\leq\left(\max _{n} L_{r_{n}}\right)^2 
&
\sum_{n=1}^{N} \mathbb{E}
\left[
 \left\|\hat{y}_{n}[t]-y_{n}[t]\right\|_{2}^2
\right]
\end{align}
\end{theorem}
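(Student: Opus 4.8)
The plan is to reduce the whole statement to a single application of the Lipschitz property applied per sensor, followed by summation. The essential observation is that both the true and the predicted measurement-space values are obtained by applying the \emph{same} per-sensor map $r_n$ to their latent-space counterparts. The hypothesis already gives $z_n[t] = r_n(y_n[t])$. What must be made explicit is the corresponding statement for the predictions, namely $\hat z_n[t] = r_n(\hat y_n[t])$, where $\hat y_n[t]$ is the latent VAR prediction $\sum_{p=1}^P\sum_{n'}a_{n,n'}^{(p)}y_{n'}[t-p]$. This is exactly how the measurement-space prediction is constructed in the model (compare the forward pass $\hat z_i[t]=f_i(\hat y_i[t],\theta_i)$ in Formulation~A, with $f_i$ playing the role of $r_n$ here). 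Once this identification is in place, the two error quantities in the bound are linked as differences of images of one scalar Lipschitz map.

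First I would write each per-sensor measurement-space error as a difference of images under $r_n$ and invoke Lipschitz continuity:
\begin{equation}
\left|\hat z_n[t]-z_n[t]\right| = \left|r_n(\hat y_n[t])-r_n(y_n[t])\right| \le L_{r_n}\left|\hat y_n[t]-y_n[t]\right|.
\end{equation}
Squaring both sides gives a pointwise (in $t$ and per realization) bound $\left|\hat z_n[t]-z_n[t]\right|^2 \le L_{r_n}^2\left|\hat y_n[t]-y_n[t]\right|^2$. Since each $r_n$ is a scalar map, the $\|\cdot\|_2^2$ in the statement coincides with this squared absolute value, so no vector-norm subtleties intervene.

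Next I would uniformize the Lipschitz constants by using $L_{r_n}^2 \le \left(\max_m L_{r_m}\right)^2$ for every $n$, which lets me pull a single constant out of the later summation. Taking expectations (over the innovation process $u[t]$, which is the source of randomness) preserves the inequality by monotonicity, yielding for each $n$
\begin{equation}
\mathbb{E}\!\left[\left\|\hat z_n[t]-z_n[t]\right\|_2^2\right] \le \left(\max_m L_{r_m}\right)^2 \,\mathbb{E}\!\left[\left\|\hat y_n[t]-y_n[t]\right\|_2^2\right].
\end{equation}
Summing these $N$ inequalities and factoring out the constant $\left(\max_m L_{r_m}\right)^2$ produces exactly the claimed bound \eqref{eq:theorem1_bound}.

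The argument is almost entirely mechanical, so the only non-routine point — and the one I would state carefully rather than grind through — is the justification that $\hat z_n[t]=r_n(\hat y_n[t])$, i.e.\ that the measurement-space prediction is defined as the image under $r_n$ of the latent-space prediction. This is what couples the two MSEs; without it the inequality has no content. I do not anticipate any genuine obstacle beyond making that definition explicit, since Lipschitz continuity, monotonicity of expectation, and the uniform bound $L_{r_n}\le\max_m L_{r_m}$ each contribute one elementary step.
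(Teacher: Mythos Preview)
Your proposal is correct and follows essentially the same route as the paper's proof: apply the per-sensor Lipschitz inequality, square, take expectations, bound each $L_{r_n}^2$ by $(\max_m L_{r_m})^2$, and sum over $n$. If anything, you are slightly more careful than the paper in making explicit the identification $\hat z_n[t]=r_n(\hat y_n[t])$, which the paper uses implicitly when passing from the Lipschitz inequality to the $z$-variable inequality.
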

\begin{proof}[Proof]

Given that $L_{r_n}$ is Liptschitz continuous with Lipschitz constant $L_{r_n}$, the following holds:

\begin{align}
\label{eq:l_r}
&\|r_n(\hat{y})-r_n(y)\|_{2} \leq L_{r_n}\|\hat{y_n}-y_n\|_{2}
\\
\label{eq:l_yz}
&\|\hat{z_n}[t]-z_n[t]\|_{2} \leq L_{r_n}\|\hat{y_n}[t]-y_n[t]\|_{2}
\end{align}

Squaring \eqref{eq:l_yz} equation and taking expectation, we obtain the following:

\begin{align}
\label{eq:l_least_squares}
\sum_{n=1}^{N}\mathbb{E}\left[\left\|\hat{z}_{n}[t]- z_{n}[t]\right\|_{2}^2\right]  
&\leq \sum_{n=1}^{N} (L_{r_{n}})^2 \, \mathbb{E}\left[\left\|\hat{z}_{n}[t]-z_{n}[t]\right\|_{2}^2\right] \nonumber
\\
\leq\left(\max _{n} L_{r_{n}}\right)^2 &\sum_{n=1}^{N} \mathbb{E}\left[\left\|\hat{y}_{n}[t]-y_{n}[t]\right\|_{2}^2\right]
\end{align}

\end{proof}

The Lipschitz continuity constant of a specific instance of function $v$ can be obtained from a differential property as

\begin{equation}
\label{ld}
L_{r_{n}}=1 / {\min _{x^{\prime}}\left\{\frac{d v_{n}(x)}{d x} \mid_{x^{\prime}=x}\right\}}
\end{equation}

Intuitively, if $v_n$ is too flat, then $r_n$ is too steep, which implies that a small variation in the prediction in the latent space can be associated with a large variation in the prediction in the measurement space, which can entail a larger prediction MSE in the measurement space as the bound becomes loose.

Now that the rationale for having objective function \eqref{eq:objective_function_b} is clear, we explain the constraints:
%\subsection{Solving the optimization problem}
\eqref{eq:constraint_wpos4_b} and \eqref{eq:constraint_wpos5_b} ensures that the mean of the output of $v_n$ is 0 and the variance of the output of $v_n$ is 1 inside the latent space. The idea to enforce these constraints is to have $v_n$ in the proper dynamic range so that it is not flat. It enacts a nonlinear normalization into the latent space. Notice that if $v_n$ is flat, the left-hand side of \eqref{ld} goes to infinity, making $r_n$ not Lipschitz continuous anymore. Constraints \eqref{eq:constraint_wpos1_b}, \eqref{eq:constraint_wpos2_b} and \eqref{eq:constraint_wpos3_b} ensures that each function $v_n$ is invertible.

Similarly to the first formulation, we also enforce sparsity-inducing penalties for the VAR coefficients, and the regularization term in the second summand is again tackled by using a proximal parameter update. 

Notice that, as opposed to the first formulation, the optimization problem does not explicitly include the inverse function, and hence the burden of computing the inverse function with the bisection method is avoided resulting in reduced complexity.

Next, we aim to solve the optimization problem \eqref{eq:optimization_problem_b} using Lagrangian duality. More specifically, we dualize constraints \eqref{eq:constraint_wpos4_b} and \eqref{eq:constraint_wpos5_b}. The remaining constraints can be easily enforced by using a projection operation.
The objective function and the constraints \eqref{eq:constraint_wpos5_b} and \eqref{eq:constraint_wpos4_b} are of non-convex nature. Notice that since the optimization problem is not convex, we cannot theoretically claim that an iterative algorithm based on duality will achieve a globally optimal solution satisfying all the constraints. However, as we will show in the experimental results section, our algorithm achieves satisfactory results.

With $\beta$ and $\mu$ respectively denoting the dual variables associated with constraint \eqref{eq:constraint_wpos4_b} and \eqref{eq:constraint_wpos5_b} of the optimization problem \eqref{eq:objective_function_b}, the partial Lagrangian based on \eqref{eq:objective_function_b} can be written as:
\begin{align}
\label{eq:lagragian}
\mathcal{L}
\left(
    \left\{{A}_{p}\right\}_{p=1}^{P}, \theta,\beta,\mu
\right) 
= &f_o\left(
\left\{{A}_{p}\right\}_{p=1}^{P}, \theta
\right)
\\
&+ \beta^\top g_1(\theta) + \mu^\top  g_2(\theta) \nonumber
\end{align}
where
\begin{align}
\label{eq:f_o}
&f_o{\left(\left\{{A}_{p}\right\}_{p=1}^{P}, {\theta}\right)} = \nonumber
\\
&\frac{1}{T-P}\sum_{t=P}^{T-1}\left\|v(z[t])-\sum_{p=1}^{P} A^{(p)} v(z[t-p])\right\|_{2}^{2}\nonumber
\\
&+ \lambda \sum_{p=1}^{P}\sum_{n=1}^{N} \sum_{n^{\prime}=1}^{N}\left|a_{n, n^{\prime}}^{(p)}\right|
\end{align}
\begin{equation}
\label{eq:g_1}
[g_1\{{\theta}\}]_i = \frac{\sum_{t=0}^{T-1} v_{i}(z_i[t])}{T}, \;\; \forall i 
\end{equation}
\begin{equation}
\label{eq:g_2}
[g_2\{{\theta}\}]_i = \frac{\sum_{t=0}^{T-1}(v_{i}(z_i[t]))^{2}}{T-1}-1, \;\; \forall i 
\end{equation}
% $L_t{\left\{\left\{{A}_{p}\right\}_{p=1}^{P}, {\theta},\beta,\mu\right\}}

The following steps show how the optimization problem can be solved using the stochastic primal-dual algorithm \cite{boyd_dual}. Considering $\eta_p$ and $\eta_d$ as primal and dual learning rate, The following steps are derived:

Let us define a stochastic version of the partial Lagrangian function:

\begin{align}
\label{eq:stochastic_lagragian}
\tilde{\mathcal{L}}{\left(\left\{{A}_{p}\right\}_{p=1}^{P}, {\theta},\beta,\mu; t\right)} 
&= \tilde{f_o}\left(\left\{{A}_{p}\right\}_{p=1}^{P}, {\theta}; t\right)+ \beta^\top  \tilde{g_1}\{{\theta}; t\} \nonumber
\\
& + \mu^\top  \tilde{g_2}\{{\theta}; t\}
\end{align}

In the next paragraphs $\tilde{\mathcal{L}},\tilde{f_o},\tilde{g_1} \text{ and } \tilde{g_2}$ are defined such that 
\begin{equation}
\mathcal{L}{\left(\left\{{A}_{p}\right\}_{p=1}^{P}, {\theta},\beta,\mu\right)} = \sum_{t=0}^{T-1} \tilde{\mathcal{L}}{\left(\left\{{A}_{p}\right\}_{p=1}^{P}, {\theta},\beta,\mu; t\right)}.
\end{equation}

\text { Accordingly, the stochastic contribution to $f_o$ is defined as:} 

\begin{equation}
\begin{aligned}
&\tilde{f_o} 
\left(\left\{{A}_{p}\right\}_{p=1}^{P}, {\theta}\right) 
\\
&=\left\{
\begin{array}{lr}
0,  & \hspace{-8mm}{0<t<P}\\
\frac{1}{T-P}\Big[\left\|v(z[t])-\sum_{p=1}^{P} A^{(p)} v(z[t-p])\right\|_{2}^{2} & \nonumber
\\
+\lambda\sum_{p=1}^{P}\sum_{n=1}^{N} \sum_{n^{\prime}=1}^{N}\left|a_{n, n^{\prime}}^{(p)}\right|\Big], & { t \geq P}
\end{array}\right.
\end{aligned}
\end{equation}

then, we have that: 
\begin{equation}
f_o\left(\left\{{A}_{p}\right\}_{p=1}^{P}, {\theta}\right) = \sum_{t=0}^{T-1} {\tilde{f_o}\left(\left\{{A}_{p}\right\}_{p=1}^{P}, {\theta}; t\right)}.
\end{equation}

\begin{equation}
\text { similarly, consider: } {[\tilde{g_1}\{{\theta}; t\}]_i}
 = \frac{q_{i}(z_i[t])}{T}, \;\; \forall i .
\end{equation}

\begin{equation}
[g_1\{{\theta}\}]_i
 = \sum_{t=0}^{T-1} {[\tilde{g_1}\{{\theta}; t\}]_i}  \;\; \forall i.
\end{equation}

\begin{equation}
\text { Also } [\tilde{g_2}\{{\theta}; t\}]_i
 =  \frac{(v_{i}(z_i[t]))^{2}-(T-1)/T)}{T-1} \;\; \forall i.
\end{equation}
 
\begin{equation}
[g_2\{{\theta}\}]_i
 = \sum_{t=0}^{T-1} {[\tilde{g_2}\{{\theta}; t\}]_i}  \;\; \forall i.
\end{equation}

 Let $t(k)$ denote the time instant used at iteration $k$, the stochastic primal update equations are:
\begin{equation}
\label{eq:theta_update_stoch}
{\theta}_i[k+1] = {\theta}_i[k] - \eta_p \frac{\partial \mathcal{\tilde L}{\left(\left\{{A}_{p}[k]\right\}_{p=1}^{P}, {\theta}[k],\beta[k],\mu[k];t(k)\right)}}{\partial{\theta}_{i}[k]}
\end{equation}

% \begin{equation}
% \label{eq:g_1}
% a_{n, n^{\prime}}^{(p)}[k+1] = a_{n, n^{\prime}}^{(p)}[k] - \eta_P \frac{\partial \mathcal{ \tilde L}{\left\{\left\{{A}_{p}[k]\right\}_{p=1}^{P}, {\theta}[k],\beta[k],\mu[k];t(k)\right\}}}{\partial a_{n, n^{\prime}}^{(p)}[k]}, \;\; \forall p,n,n^{\prime}
% \end{equation}

\begin{align}
\label{eq:a_prox_lip}
a_{n n^{\prime}}^{(p)}[k+1]&= \operatorname{prox}_{q, \eta_p}\Bigg({a_{n n^{\prime}}^{(p)}(k)} \nonumber
\\
-&\eta_p 
%\bigg(
\frac{\partial\mathcal{ \tilde L}{\left(\left\{{A}_{p}[k]\right\}_{p=1}^{P}, {\theta}[k],\beta[k],\mu[k];t(k)\right)}}{\partial a_{n, n^{\prime}}^{(p)}[k]}
%\bigg)^\top
\Bigg)
\end{align}

%\begin{align} 
%\label{calculatinga3}
%& a_{n n^{\prime}}^{(p)}[k+1] = 
%\\
%&\left(a_{n n^{\prime}}^{(p)}[k]- \eta\nabla_{a_{n n^{\prime}}}^{(p)} l\right)\left[1-\frac{t \lambda}{| a_{n n^{\prime}}^{(p)}[k]-\eta\nabla_{a_{n n^{\prime}}^{(p)}}l |}\right]_+
%\end{align}

Similarly, the stochastic dual update equations are:
\begin{align}
\label{eq:beta_update_stocastic}
&\beta_i[k+1]  = \beta_i[k] \nonumber
\\
&+ \eta_d \frac{\partial \mathcal{\tilde L}{\left(\left\{{A}_{p}[k+1]\right\}_{p=1}^{P}, {\theta}[k+1],\beta[k],\mu[k];t(k)\right)}}{\partial \beta_i[k]}
\\
& = \beta_i[k] + \eta_d [\tilde{g_1}\{{\theta[k+1];t(k)}\}]_i
\end{align}
\begin{align}\label{eq:mu_update_stocastic}
&\mu_i[k+1]  = \mu_i[k] \nonumber
\\
&+ \eta_d \frac{\partial \mathcal{\tilde L}{\left(\left\{{A}_{p}[k+1]\right\}_{p=1}^{P}, {\theta}[k+1],\beta[k],\mu[k];t(k)\right)}}{\partial \mu_i[k]}
\\
& = \mu_i[k] + \eta_d [\tilde {g_2}\{{\theta[k+1];t(k)}\}]_i
\end{align}

As discussed in Sec. \ref{section_2_modelling}, a strategy to increase the chance of obtaining a non-linear model performing better than the linear one is to initialize the nonlinearity to resemble an identity function at the range of the input data. The initial form of the function $v_i$ is required to resemble as much as possible the inverse of the initial shape of the function $f$ used in Formulation A. Since the initial $f$ in formulation A behaves like the identity in the range of the input data and is flat out of that range, the initial $v_i$ in Formulation B is sought to behave like the identity in the range of the input data and have a steep slope out of that range. Following steps similar to those described for the initialization of $f$ in Sec. \ref{section_1_modelling} the parameters for each node can be obtained by transforming the parameters obtained from training a standard initial function which behaves as an identity between -1 and 1.

The steps described in this section are summarized in \textbf{Algorithm 2}.

\begin{algorithm}
   \caption{Latent error minimization-based inference}
   \label{alg:NL-VAR_2}
\begin{algorithmic}
   \State {\bfseries Result:} $\boldsymbol{a}_{n, n^{\prime}}^{(p)}$, for $n, n^{\prime}=1, . ., N$ and $p=1,p+1, . ., P$
   \State {\bfseries Input:} data $z_i$, $\lambda$, $N$, order $P$, $M$, $T$, learning rates $\eta_p, \eta_d$.
   \State {\bfseries Initialize:} $\boldsymbol{a}_{n, n^{\prime}}^{(p)}$,$ \theta_{i}$ 
   \For{$t=P,P+1,...,T$}
      \For{$n=1,2,...,N$} 
      \State
        Generate $ y_n[t]$  from $z_n[t]$ using $v_n$
        \State
        Obtain $y_n[t+1]$ via \eqref{eq:var_scalar} 
        \State
        Obtain $\tilde{\mathcal{L}}{\left(\left\{{A}_{p}\right\}_{p=1}^{P}, {\theta},\beta,\mu; t\right)} $ via \eqref{eq:lagragian} 
        \State
        Network parameter update: $\theta_{n}$ via \eqref{eq:theta_update_stoch}
        %= \theta_{n} - \eta\frac{d \tilde{L}[t]}{d \theta_{n}}$
        \State Dual parameters update: $\beta$, $\mu$ via \eqref{eq:beta_update_stocastic}, \eqref{eq:mu_update_stocastic}
        \State Projection operation \eqref{eq:projection4}
        \For{$n^{\prime}=1,2,...,N$}
            \For{$p=1,2,...,P$}
            \State VAR parameter update: $ a_{n n^{\prime}}^{(p)}[t+1] $ via \eqref{eq:a_prox_lip}
            \EndFor
        \EndFor
      \EndFor
    \EndFor
\end{algorithmic}
\end{algorithm}
\section{Simulation Experiments}
\label{Simulation_experiments}
In this section, we conduct comprehensive numerical tests to assess the performance of our algorithms formulation A (f\textunderscore A) and formulation B (f\textunderscore B) on synthetic and real data sets. We provide comparisons against the best four current competitors: cMLP (component-wise Multi-Layer Perceptrons), cLSTM (component-wise Long Short-Term Memory), cRNN (component-wise Recurrent Neural Networks)\cite{tank2021neural}, and linear VAR. 
% While other algorithms are available \cite{rohanjoshin}, \cite{rohanjoshin_2}, and \cite{online_TI}, for nonlinear topology identification, They haven't compared with existing nonlinear batch based algorithm like cRNN, cLSTM, cRNN or any offline kernel based methods. Even though it is generally possible to adapt an online inference algorithm to work on a batch of data by executing the online algorithm with a diminishing step-size and averaging the history of estimates for each parameters. However, comparison with online version making it offline will be considered in our future works.

The proposed algorithms are evaluated based on the performance metrics described next, where expectations are approximated by the Monte Carlo method.

The probability of false alarm ($P_{\textrm{FA}}$) and probability of detection ($P_{\textrm{D}}$) are used to numerically compare the edge-identification performance of the algorithms. The $P_{\textrm{FA}}$ is the likelihood that the algorithm detects the existence of a dependence that does not exist, whereas the $P_{\textrm{D}}$ is the likelihood that the algorithm discovers a dependence that is really existent in the network. In our experiments, we suppose that there is a detectable edge from the 
$p^{th}$ time-lagged value of the 
$n^{th}$ sensor to  $n^{th}$ sensor if the absolute value of coefficient ${a}_{n, n^{\prime}}^{(p)}$ is greater than a prespecified threshold $\delta$. Letting ${\hat{a}}_{n, n^{\prime}}^{(p)}$ be a binary variable that indicates that $a_{n, n^{\prime}}^{(p)}$ is detected as nonzero, it is computed as ${\hat{a}}_{n, n^{\prime}}^{(p)}=\mathbbm{1}\left\{| a_{n,n^{\prime}}^{(p)} |>\delta\right\}$, where $\mathbbm{1}\{x\}$ denotes the indicator function, taking value 1 when $x$ is true and 0 when $x$ is false. With ${a}_{n, n^{\prime}}$ denoting the presence of a true edge, $P_{\textrm{FA}} $ and $P_{\textrm{D}} $ are defined as 
%The presence  ${a}_{n, n^{\prime}}$
\begin{align}
\label{eq:P_D}
P_{\mathrm{D}} \triangleq 1-\frac{\sum_{n \neq n^{\prime}} \sum_{p=1}^P \mathbb{E}\left[\mathbbm{1}\left\{|a_{n, n^{\prime}}^{(p)}|>\delta\right\} \mathbbm{1}\left\{a_{n, n^{\prime}}=1\right\}\right]}{\sum_{n \neq n^{\prime}} \sum_{p=1}^P \mathbb{E}\left[\mathbbm{1}\left\{a_{n, n^{\prime}}=1\right\}\right]}
\end{align}

\begin{align}
\label{eq:P_FA}
P_{\mathrm{FA}} \triangleq \frac{\sum_{n \neq n^{\prime}} \sum_{p=1}^P \mathbb{E}\left[\mathbbm{1}\left\{|a_{n, n^{\prime}}^{(p)}|>\delta\right\} \mathbbm{1}\left\{a_{n, n^{\prime}}=0\right\}\right]}{\sum_{n \neq n^{\prime}} \sum_{p=1}^P \mathbb{E}\left[\mathbbm{1}\left\{a_{n, n^{\prime}}=0\right\}\right]}
\end{align}

With an increase in $\delta$, both $P_{\textrm{D}}$ and $P_{\mathrm{FA}}$ decrease, eventually reaching zero.

In our study, we measure the prediction accuracy using normalized mean squared error (NMSE):

\begin{align}
\label{eq:NMSE_Eq}
\operatorname{NMSE}(\mathrm{T})=\frac{\sum_{n=1}^N\sum_{t=1}^T\left(y_n\left[t\right]-\hat{y}_n\left[t\right]\right)^2}{\sum_{n=1}^N\sum_{t=1}^T\left(y_n\left[t\right]\right)^2}
\end{align}
where $\hat{y}_n[t]$ is the estimate of the time series generated by the $n^{th}$ node at time instant $t$.
The captions and legends of the figures provide a list of all the experimental parameter values.

\begin{figure}[ht]
%\vspace{-0.6cm}
\centering
\includegraphics[width = 0.8\columnwidth]{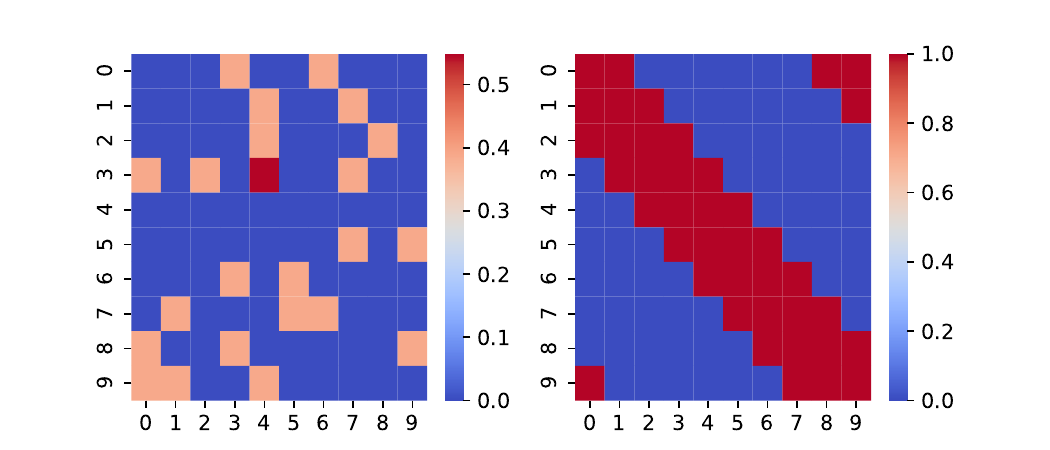}
%\vspace{-0.5cm}
\caption{True causal dependencies VAR model with $P=4$ (left) and Lorentz $F = 10$ (right)
} 
\label{fig:true}
\end{figure}

\subsection{Experiments with synthetic data}
\label{sec:synthetic}
\begin{table*}[ht]
\caption{Comparison of AUROC for VAR causality selection among different approaches, as a function of the VAR lag order and the length of the time series T . Averaged over 10 experimental runs}
\label{table_1}
\centering
\begin{tabular}{p{0.1\linewidth}p{0.1\linewidth}p{0.1\linewidth}p{0.1\linewidth}p{0.1\linewidth}p{0.1\linewidth}p{0.1\linewidth}}
\hline
Model &   & VAR lag order (P) = 4  &   &   & VAR lag order (P) = 8  &  \\
\hline
T & T = 250 & T = 500  & T = 1000  & T = 250  & T = 500  & T = 1000\\
\hline
formulation A & 0.7562 & 0.9299  & 0.9796  & 0.6437  & 0.6833  & 0.7379\\
Linear VAR & 0.8159 & 0.9153  & 0.9645  & 0.6685 & 0.6726  & 0.7202\\
formulation B & 0.7795 & 0.9435  & 0.9976  & 0.6137  & 0.6557 & 0.8084\\
cMLP & 0.6390 & 0.7424  & 0.7522  & 0.5551  & 0.5736  & 0.5845\\
cRNN & 0.6519 & 0.7947  & 0.8922  & 0.5672  & 0.5827  & 0.5935\\
cLSTM & 0.5505 & 0.5837  & 0.6116  & 0.5350  & 0.5716  & 0.5833\\
\hline
\end{tabular}
\end{table*}

\begin{table*}[ht]
\caption{comparison of AUROC for VAR causality selection among different approaches, as a function of the force constant F and the length of the time series T . Averaged over 10 experimental runs.}
\label{table_2}
\centering
\begin{tabular}{p{0.1\linewidth}p{0.1\linewidth}p{0.1\linewidth}p{0.1\linewidth}p{0.1\linewidth}p{0.1\linewidth}p{0.1\linewidth}}
\hline
Model &   & F = 10  &   &   & F = 40  &  \\

\hline
T & T = 250 & T = 500  & T = 1000  & T = 250  & T = 500  & T = 1000\\
\hline
formulation A & 0.8833 & 0.9636  & 0.9727  & 0.7141  & 0.7795  & 0.7843\\
Linear VAR           & 0.9372 & 0.9627  & 0.9716  & 0.7237  & 0.7712  & 0.7776\\
formulation B & 0.9158 & 0.9684  & 0.9785  & 0.7202  & 0.7855  & 0.8081\\
cMLP & 0.9801 & 0.9734  & 0.9827  & 0.9425  & 0.9811  & 0.9808\\
cRNN & 0.9002 & 0.9999  & 1.0000  & 0.9265  & 0.9613  & 0.9915\\
cLSTM & 0.9752 & 0.9874  & 0.9894  & 0.7732  & 0.7505  & 0.8125\\
\hline
\end{tabular}
\end{table*}

\begin{figure*}[ht]
%\vspace{-0.6cm}
\centering
\captionsetup{justification=centering}
\includegraphics[width = 1.2\columnwidth]{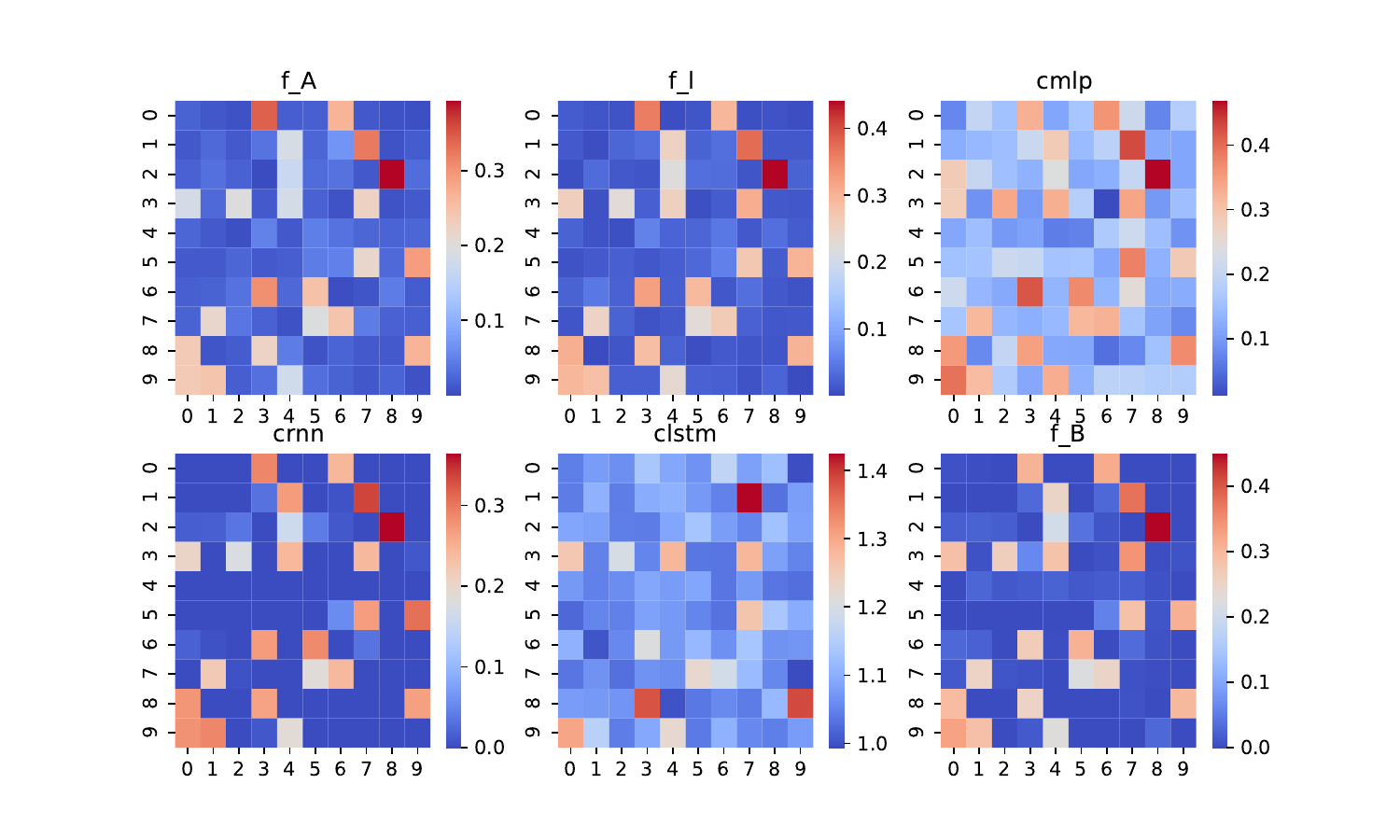}
%\vspace{-0.5cm}
\caption{Learned causal dependencies from the data generated from VAR model with $P=2$ and $T = 1000$
} 
\label{fig:VAR}
\end{figure*}

\begin{figure*}[ht]
%\vspace{-0.6cm}
\centering
\captionsetup{justification=centering}
\includegraphics[width = 1.2\columnwidth]{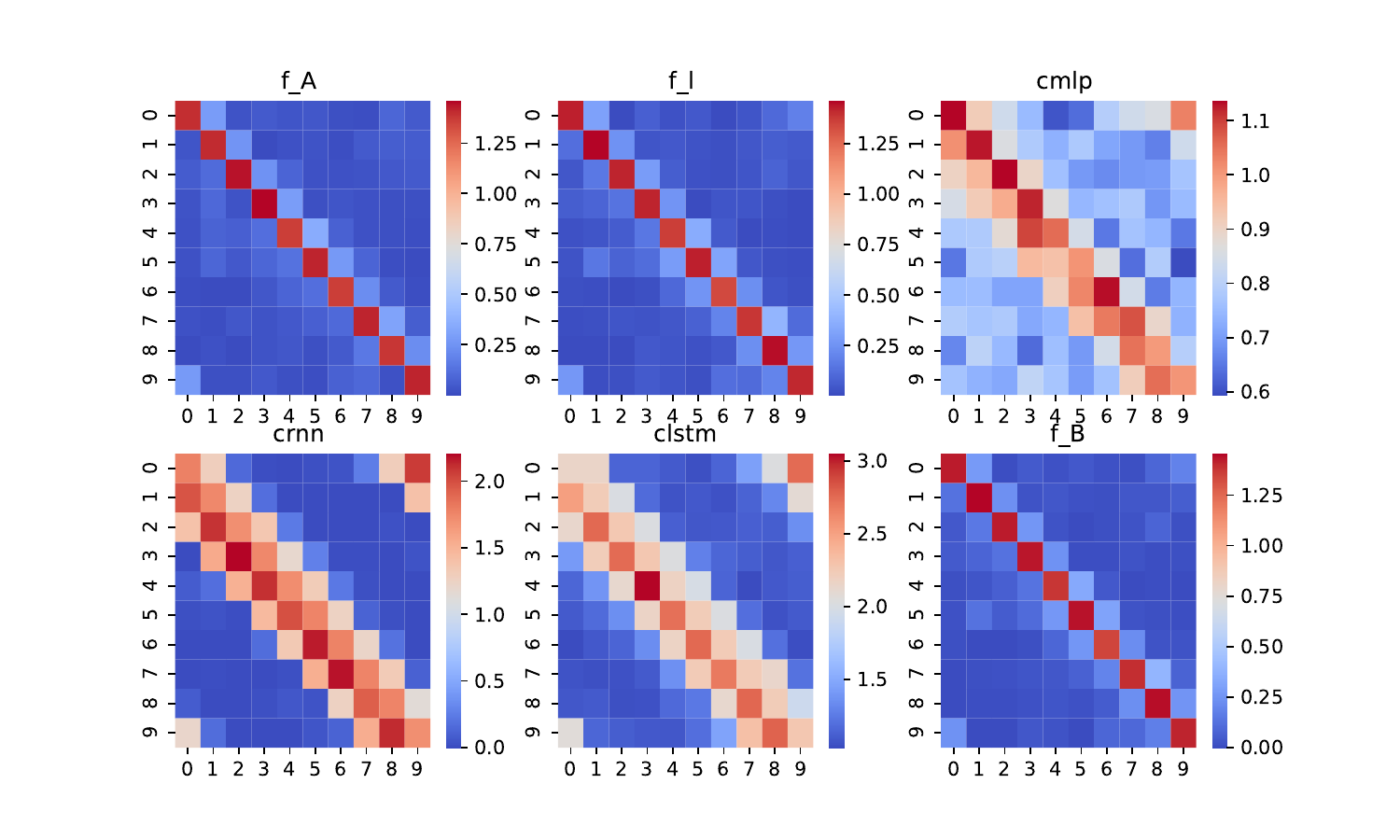}
%\vspace{-0.5cm}
\caption{Learned causal dependencies from the data generated from Lorentz model with $F = 10 $ and $T = 1000$
} 
\label{fig:lorentz}
\end{figure*}

We use both formulated algorithms to find VAR-based dependency networks in simulated data from a nonlinear VAR model matching the assumption in Sec. \ref{modelling_assumption} and from a Lorenz-96 process \cite{lorentz96}, a nonlinear model of climate dynamics, to compare and analyze the performance of our approaches with cMLP, cRNN, and cLSTM. Overall, the findings demonstrate that the proposed approaches can rebuild the underlying nonlinear VAR structure. The VAR experiment findings are presented first, followed by the Lorentz results. Note that we used Hidden units $H =10 $ for formulations A, B and $H = 100$ for cMLP, cRNN, and cLSTM throughout the experiments. 

The sparsity hyper-parameters $\lambda$ for different algorithms are selected via grid search based on the held-out validation error (note that the optimal $\lambda$ for different methods are not necessarily equal under different conditions). 

The final adjacency matrices are computed by taking the $l_2$ norm (Euclidean norm) along the third dimension (axis 3) of the estimated three-dimensional tensor $\left\{A^p\right\}$.

The metric used to compare the different approaches is the area under the receiver operating characteristic (AUROC). The ROC curve is traced by selecting different values of threshold $\delta$ and for each of these values a point ($P_{\textrm{FA}}$, $P_{\textrm{D}}$) is computed from 10 Monte Carlo runs. The reported AUROC is the area under the linear interpolant joining the aforementioned points. A topology identification algorithm with a high AUROC value generally achieves operation points with high $P_{\textrm{D}}$ and low $P_{\textrm{FA}}$, indicating that it can accurately identify network topologies while minimizing the occurrence of false positives.

\begin{figure}[ht]
%\vspace{-0.6cm}
\centering
\includegraphics[width = 0.9\columnwidth]{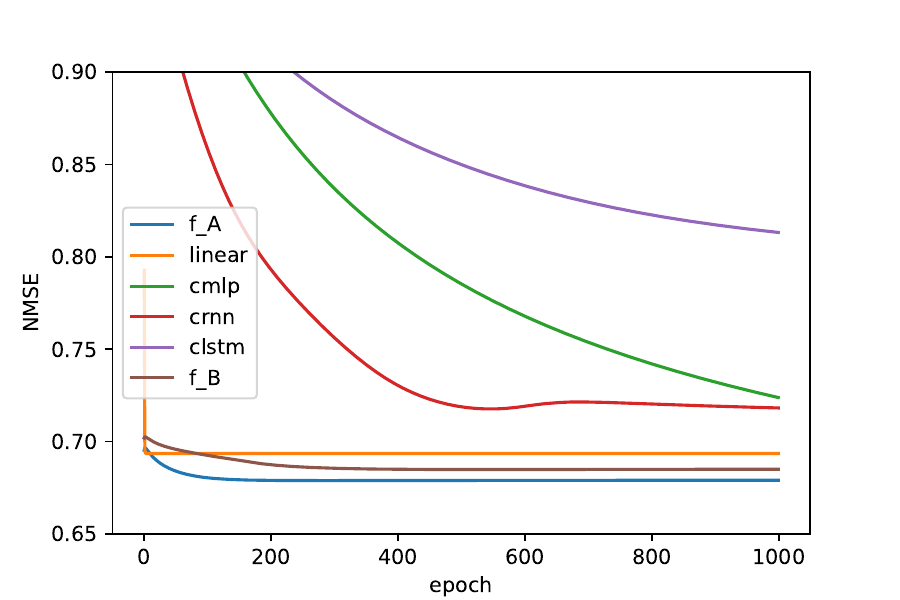}
%\vspace{0.5cm}
\caption{NMSE comparison of formulation A, B, cMLP, cRNN, cLSTM, and VAR from  data generated through nonlinear VAR model with lag order $P = 2 $ and $T = 1000$
} 
\label{fig:nmse_var}
\end{figure}

\begin{figure}[ht]
%\vspace{0.6cm}
\centering
\includegraphics[width = 0.9\columnwidth]{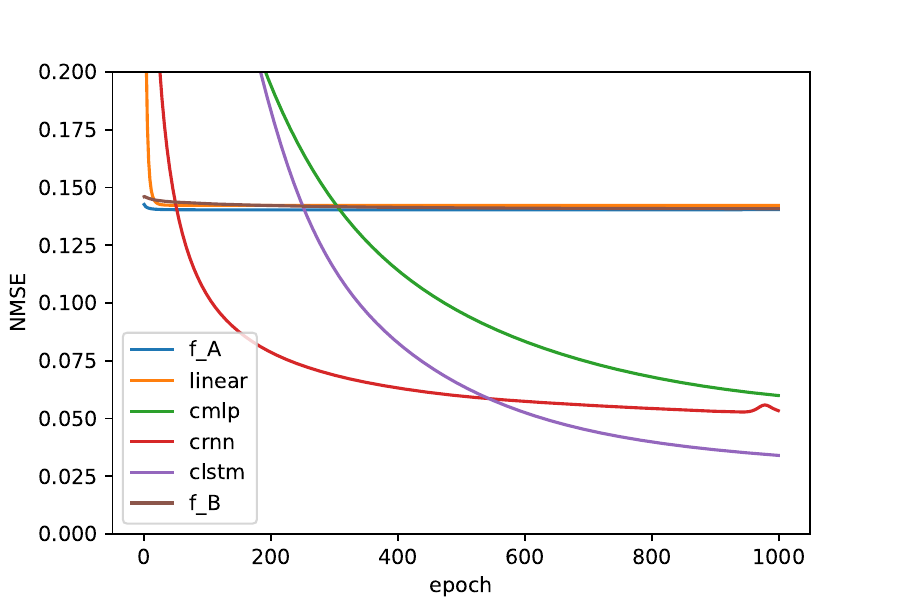}
%\vspace{-0.5cm}
\caption{NMSE comparison of formulation A, B, cMLP, cRNN, cLSTM, and VAR from  data generated  from Lorentz model with $F = 10 $ and $T = 1000$
} 
\label{fig:nmse_lorentz}
\end{figure}

The following subsections describe how the synthetic data are generated. Along all experiments, each generated dataset is split into training (70\%), validation (20\%), and test (10\%) subsets.

\subsubsection{Nonlinear VAR Model}
\label{sec:synthetic_data_nonlinear_VAR}
We generate graph-connected time series based on the nonlinear VAR (NL-VAR) model. The parameter values are $N = 10$, $T =10000$, $P =4$, and $P = 8$. When generating NL-VAR data set for $P = 4$ and $8$, we set the lag order parameter to $4$ and $8$ respectively. The VAR parameters ${a_{n n^{\prime}}^{(p)}}$ are drawn from a Bernoulli distribution with (edge) probability $p_e = 0.15$. In order to make the underlying VAR process stable, we re-scale the generated coefficient matrix \ref{fig:true}.The nonlinearity $f_i(\cdot)$ (a monotonically increasing nonlinear function) is randomly generated by drawing random values for the parameters $\theta$ from a uniform distribution and then applying the model in equation \eqref{eq:f_model}.

The nonlinear model is initialized  following the heuristic steps described at the end of Sec. \ref{section_1_modelling}.
Results are displayed in Table \ref{table_1}. The AUROC for the proposed formulations A and B, linear VAR, cMLP, cRNN, and cLSTM approaches for three values of the time series length, $T \in \{250, 500, 1000\}$ with lag order $P \in \{4,8\} $ is calculated. The performance of all  models improves at larger T for both lag orders ($P=4$ and $P = 8$). Formulations A and B outperform the linear model (VAR) for a large enough value of T. This result is expected as the model has a slightly larger expressive power, requiring a moderate increase in T to not overfit. Formulations A, B, and VAR outperform state-of-the-art cMLP, cRNN, and cLSTM models. The performance of other models seems to deteriorate over a higher lag value. It is clear from Fig. \ref{fig:VAR} and Fig. \ref{fig:true} that the estimates ($a_{n n^{\prime}}^{(p)}$) of formulation B are very close to the ground truth, and they outperform the other algorithms for $P = 2$ and $ T =1000$. From Fig. \ref{fig:nmse_var}, the results seem to suggest that the prediction capability for formulations A and B is better than that of cMLP, cRNN, and cLSTM. 

\subsubsection{Lorentz Model}
In an N-dimensional Lorenz model, the continuous dynamics are given by
\begin{align}
\label{eq:lorentz_equation}
\frac{dx_{ti}}{dt} = (x_{t(i+1)} - x_{t{(i-2)}})x_{t(i-1)} - x_{ti} +F,
\end{align}
where $x_{t(-1)} = x_{t(p-1)}, x_{t0} = x_{tp},x_{t(p+1)} = x_{t1}$; higher values of the force constant $F$ entail a stronger nonlinearity and more chaotic behavior in the time series. The data time series generated in this case corresponds to a discrete-time simulation of a multivariate Lorentz-96 model with $N=10$ series where the nonlinear dependencies follow a sparse pattern as depicted on the right pane of Fig. \ref{fig:true}.

AUROC values were calculated for formulations A, B, linear VAR, cMLP, cRNN, and cLSTM across time series lengths $T=250$, $T=500$, and $T=1000$, for force constant $F$ taking values 10 and 40. According to Table \ref{table_2}, for $F=10$, all models for $T >500$ has obtained AUROC $> 0.95 $. For more chaotic series with $F=40$, cMLP and cRNN kept a performance above 0.95, and cLSTM and interpretable models attained an AUROC value between $0.7$ and $ 0.8 $. The simplifying modeling offers interpretability with a slight loss in expressive power. In highly chaotic time series ($F=40$), performance moderately declines but remains competitive with DL models for less chaotic processes ($F=10$). AUROC improves with larger $T$, with f\_A and f\_B outperforming linear VAR for $T>500$. The cRNN model estimates closely match the ground truth, especially for $F=10$. %Formulations A and B also identify sparse connections. 
Fig. \ref{fig:nmse_lorentz} shows that the train NMSE for formulations A and B is better than that of linear VAR by a small margin, whereas the DL models perform significantly better at prediction. This result contrasts with the high and similar AUROC values shown in Table \ref{table_2}, and suggests that the proposed modeling assumption cannot capture the complexity of the Lorentz model.
%Improved AUROC for A and B, despite minor NMSE enhancement, suggests computational efficiency as their advantage, converging quickly to a low NMSE local minimum compared to cMLP, cRNN, and cLSTM requiring more iterations (Fig. \ref{fig:nmse_lorentz}).

\subsection{Experiments with real data sets}

\begin{figure}[ht]
%\vspace{-0.6cm}
\centering
\includegraphics[width = 0.9\columnwidth]{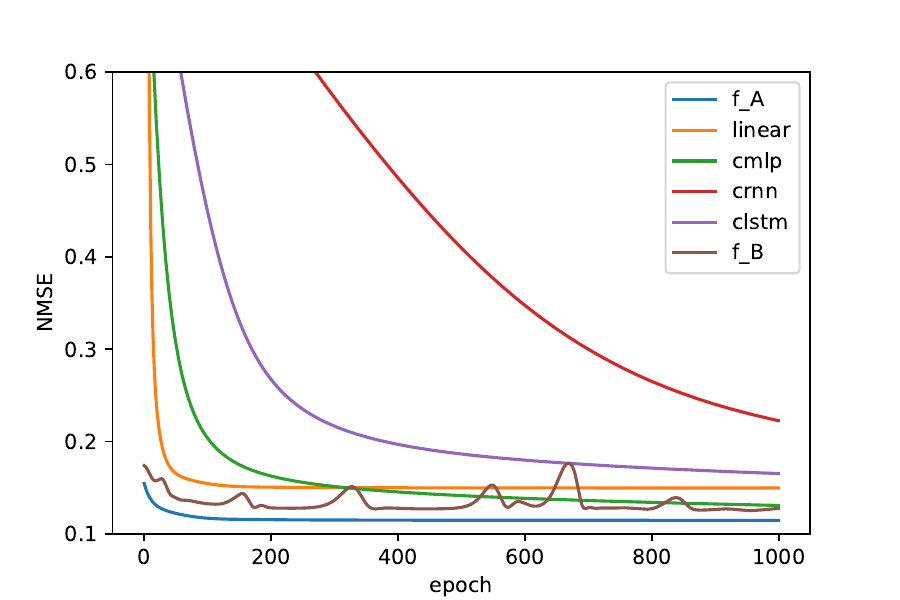}
%\vspace{-0.5cm}
\caption{NMSE comparison of formulation A, B, cMLP, cRNN, cLSTM, and VAR using real data from Lundin separation facility. $N = 24 $ and $T = 4000$.
} 
\label{fig:nmse_lundin}
\end{figure}

\begin{figure}[ht]
%\vspace{-0.6cm}
\centering
\includegraphics[width = 0.8\columnwidth]{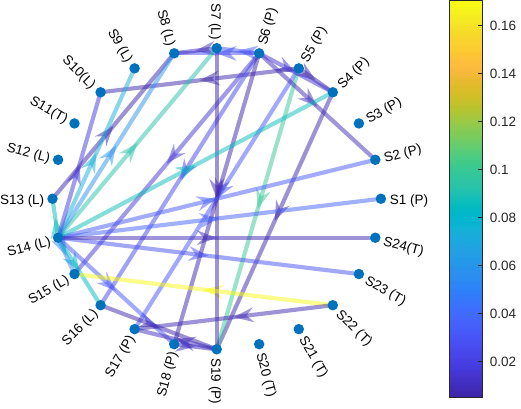}
%\vspace{-0.5cm}
\caption{Causal dependencies estimated using formulation B for real data from Lundin separation facility with $N  = 24 $ and $T = 4000$
} 
\label{fig:adj_lundin}
\end{figure}

In this section, we conduct experiments using data collected from a sensor network at the Edvard Grieg offshore oil and gas platform. We have 24 time series, each representing sensor readings from decantation tanks measuring temperature (T), pressure (P), or oil level (L). Our goal is to uncover hidden dependencies and predict the system's short-term future state in terms of variables such as pressure and temperature, which may be influenced by physical tank proximity, pipeline flows, and control mechanisms. To create these time series, we uniformly sample sensor values every 5 seconds, resulting in 4000 samples in total.

We employ various methods, including Formulations A, B, cMLP, cRNN, cLSTM, and linear VAR, to infer these variable relationships. The optimal $\lambda$ is determined through a grid search and cross-validation process. Using the parameters learned from Formulation B, we construct an adjacency matrix by computing the $l_2$ norm of the parameter vector for each pair of nodes. The resulting graph is visualized in Fig. \ref{fig:adj_lundin}, where self-loops are removed, and arrow colors indicate edge weights.

Additionally, Fig. \ref{fig:nmse_lundin} displays the performance of all methods in terms of training NMSE. Formulations A and B consistently outperform VAR, cMLP, cRNN, and cLSTM, with Formulation A achieving the lowest prediction NMSE. This aligns with our results from synthetic nonlinear VAR data in Sec. \ref{sec:synthetic_data_nonlinear_VAR}, where Formulation B demonstrated superior topology identification performance. Since there is no ground truth available for the topology in this case, we visualize the graph identified by Formulation B for clarity.

\section{Conclusion}

To discover the dependencies that are inherent to  a nonlinear multivariate model, a modelling technique has been described, formulated and validated. The main modelling idea is that a nonlinear VAR model can be expressed as the composition of a linear VAR model and a set of univariate, invertible nonlinear functions. A NN is associated with each variable in such a model to express the non-linear relation between a real-world sensor and a latent variable that is part of a VAR model that can be directly associated with a graph. In order to increase the ability of the suggested algorithms to identify the topology underlying a set of time series in an interpretable way, a sparsity-inducing penalty has been added to the estimation cost function. Two different approaches to the estimation of the model parameters are proposed, one of them (formulation A) based on 
minimizing the MSE in the sensor measurement space, and the other one (formulation B) based on minimising the MSE in the latent space. The solvers for both
techniques combine proximal gradient descent and projected gradient descent. Formulation B additionally requires to stabilize the mean and variance of the signals in the latent space, the associated constraints being enforced via a primal-dual algorithm.

Numerical results obtained from experiments that use  both synthetic and real data  indicate that the proposed technique achieves competitive results as its performance is compared with existing state-of-the-art models, in terms of topology identification and prediction ability. This shows that the proposed formulations are useful for determining the nonlinear relationships of sensor networks in the real world, encouraging further research in nonlinear VAR-based topology identification algorithms. Based on the information and experiments provided, it appears that formulation B is more suitable for estimating the adjacency graph, while formulation A is more efficient for prediction tasks.

% \textcolor{red}{What can we conclude about the comparison between Formulation A and formulation B? Is formulation B always recommended? (faster, more precise)}

\section*{Appendix A}
In this appendix we provide the detailed derivation of the backward equations.
The gradient of the cost is obtained by applying the chain rule as follows:
\begin{equation} \label{chainrule1}
\begin{array}{c}
\frac{d C[t]}{d \theta_{i}}=\sum_{n=1}^{N} \frac{\partial C}{\partial \hat{z}_{n}[t]}  \frac{\hat{z}_{n}[t]}{\partial \theta_{i}} \\
\text { where } \frac{\partial C}{\partial \hat{z}_{n}[t]}=2(\hat{z}_n[t]-z_n[t]) = S_n
\end{array}
\end{equation}

\begin{equation} \label{chainrule2}
\frac{\partial \hat{z}_{n}[t]}{\partial \theta_{i}}=\frac{\partial f_{n}}{\partial \hat{y}_{n}}  \frac{\partial \hat{y}_{n}}{\partial \theta_{i}}+\frac{\partial f_{n}}{\partial \theta_{n}}  \frac{\partial \theta_{n}}{\partial \theta_{i}} \\
\quad 
\end{equation}
$$
\text { where } \frac{\partial \theta_{n}}{\partial \theta_{i}}=\left\{
\begin{array}{l}
I, n = i \\
0, n  \neq i
\end{array}\right.
$$
Substituting equation \eqref{chainrule1} into \eqref{chainrule2} yields
\begin{equation} \label{derivativecost1}
\frac{d C[t]}{d \theta_{i}}=\sum_{n=1}^{N} S_{n}\left(\frac{\partial f_{n}}{\partial \hat{y}_{n}}  \frac{\partial \hat{y}_{n}}{\partial \theta_{i}}+\frac{\partial f_{n}}{\partial \theta_{n}}  \frac{\partial \theta_{n}}{\partial \theta_{i}}\right)_\cdot
\end{equation}
Equation\eqref{derivativecost1} can be simplified as:

\begin{equation} \label{derivativecost2}
    \frac{d C[t]}{d \theta_{i}}
    =
    S_{i}
    \frac{\partial f_{i}}{\partial \theta_{i}}
    +\sum_{n=1}^{N} S_{n}\frac{\partial f_{n}}{\partial \hat{y}_{n}}  \frac{\partial \hat{y}_n}{\partial \theta_{i}}.
\end{equation}

$\text { The next step is to derive } \frac{\partial \hat{y}_{n}}{\partial \theta_{i}} \text { and } \frac{\partial f_{i}}{\partial \theta_{i}}$ of  equation \eqref{derivativecost2}:

\begin{equation} \label{gradientyhat}
    \frac{\partial \hat{y}_{n}[t]}{\partial \theta_i}
    =
    \sum_{p=1}^{P}\sum_{j=1}^{N}  
        a_{n j}^{(p)} 
        \frac{\partial}{\partial \theta_{j}} \tilde{y}_{j}[t-p]
        \frac{\partial \theta_{j}}{\partial \theta_{i}} .
\end{equation}

With $f_{i}^{\prime}\left(\hat{y}\right)= 
\frac{\partial f_i\left(\hat{y}, \theta_{i}\right)}{\partial\left(\hat{y}\right)}, $  
expanding $\tilde{y}_j[t-p]$ in equation \eqref{gradientyhat} yields
%changes \eqref{derivativecost2} to:

%\begin{equation} \label{derivativecost3}
%\text { So } \frac{d C[t]}{d \theta_{i}}=S_{i}\left(\frac{\partial f_{i}}{\partial \theta_{i}}\right)+\sum_{p=1}^{P} \sum_{n=1}^{N} S_{n} f_{n}^{\prime}(\hat{y}_n[t])  a_{n i}^{(p)} \frac{\partial}{\partial \theta_{i}} g_{i}\left(z_{i}[t-p],\theta_{i}\right)_\cdot
%\end{equation}

%equation \eqref{derivativecost3} becomes:

\begin{align} \label{derivativecost4}
 \frac{d C[t]}{d \theta_{i}}= &S_{i}\left(\frac{\partial f_{i}}{\partial \theta_{i}}\right)\nonumber
 \\
 &+\sum_{n=1}^{N} S_{n}\left(f_{n}^{\prime}(\hat{y}_{n}[t]) \sum_{p=1}^{P} a_{n i}^{(p)} \frac{\partial}{\partial \theta_{i}} g_{i}\left(z_{i}[t-p],\theta_{i}\right)\right)
\end{align}

Here, the vector 
$$\frac{\partial f_i\left(\hat{y}, \theta_{i}\right)}{\partial \theta_{i}} %\text{ in equation } \eqref{derivativecost4} 
= \left[
\frac{\partial f_i\left(\hat{y}, \theta_{i}\right)}{\partial \alpha_{i}} 
\frac{\partial f_i\left(\hat{y}, \theta_{i}\right)}{\partial w_{i}} 
\frac{\partial f_i\left(\hat{y}, \theta_{i}\right)}{\partial k_{i}} 
\frac{\partial f_i\left(\hat{y}, \theta_{i}\right)}{\partial b_{i}} \right]$$ can be obtained by standard or automated differentiation 

%via, e.g., Pytorch \cite{NEURIPS2019_9015}.

However, \eqref{derivativecost4} involves the calculation of $\frac{\partial g_i(z, \theta_{i})}{\partial \theta_{i}}$, which is not straightforward to obtain. Since $g_i(z)$ can be computed numerically, the derivative can be obtained by implicit differentiation, realizing that the composition of $f_i$ and $g_i$ remains invariant, so that its total derivative is zero:

\begin{equation} \label{dfwrttheta1}
\frac{d}{d \theta_{i}}\left[f_i\left(g_i\left(z, \theta_{i}\right), \theta_{i}\right)\right]=0
\end{equation}

\begin{equation} \label{dfwrttheta2}
\Rightarrow \frac{\partial f_i\left(g_i\left(z, \theta_{i}\right), \theta_{i}\right)}{\partial g\left(z, \theta_{i}\right)} \frac{\partial g\left(z, \theta_{i}\right)}{\partial \theta_{i}}+\left.\frac{\partial f_i\left(\tilde{y}, \theta_{i}\right)}{\partial \theta_{i}}\right\vert_{\tilde{y} = g_i\left(z, \theta_{i}\right)}=0
\end{equation}

\begin{equation} \label{dfwrttheta3}
\Rightarrow {f^{\prime}_i(g_i(z,\theta_{i}))} \frac{\partial g\left(z, \theta_{i}\right)}{\partial \theta_{i}}+\left.\frac{\partial f_i\left(\tilde{y}, \theta_{i}\right)}{\partial \theta_{i}}\right\vert_{\tilde{y} = g_i\left(z, \theta_{i}\right)}=0
\end{equation}

\begin{align} \label{dgwrttheta1}
\text { Hence }& \frac{\partial g_i\left(z, \theta_{i}\right)}{\partial \theta_{i}}=\nonumber
\\
&-\big\{f^{\prime}_i(g_i(z,\theta_{i}))\big\}^{-1}{\left(\left.\frac{\partial f_i\left(\tilde{y}, \theta_{i}\right)}{\partial \theta_{i}}\right\vert_{\tilde{y} = g_i\left(z, \theta_{i}\right)}\right)}_\cdot 
\end{align}
%
%$$\text{where }\frac{\partial g_i\left(z, \theta_{i}\right)}{\partial \theta_{i}} =  \left[\begin{array}{l}
%\frac{\partial g_i\left(z, \theta_{i}\right)}{\partial \alpha_{i}} \\
%\frac{\left.\partial g_i\left(z\right, \theta_{i}\right)}{\partial k_{i}} \\
%\frac{\partial g_i(z,\theta_{i})}{\partial b_{i}}
%\end{array}\right]_\cdot$$
%

The gradient of $C_T$ w.r.t. the VAR coefficient $a^{(p)}_{ij}$ is calculated as follows: 

\begin{equation} \label{dCwrtthetaa}
\frac{d C[t]}{d a^{(p)}_{i j}}=\sum_{n=1}^{N} S_{n}  \frac{\partial f_{n}}{\partial \hat y_{n}}  \frac{\partial \hat{y}_{n}}{\partial a_{i j}^{(p)}}
\end{equation}

\begin{equation} \label{dCwrtthetaa1} \nonumber
\frac{\partial \hat{y}_{n}[t]}{\partial a_{i j}^{(p)}}=\frac{\partial}{\partial a_{i j}^{(p)}} \sum_{p^\prime=1}^{P} \sum_{q=1}^{N} a_{n q}^{(p^\prime)} \tilde{y}_{q}[t-p]
\end{equation}

\begin{equation} \label{dCwrtthetaa2}
\begin{array}{c}
\text { where } 
\frac{\partial a_{n q}^{(p^\prime)}}{\partial a_{i j}^{(p)}}=\left\{\begin{array}{l}
1, n=i, p = p^\prime, \text { and } q=j \\
0, \text {otherwise}
\end{array}\right.
\end{array}
\end{equation}

%\begin{equation} \label{dCwrtthetaa3}
%\frac{\partial \hat{y}_{n}[t]}{\partial a_{i j}^{(p)}}=\left\{\begin{array}{c}
%\sum_{p=1}^{P} \tilde{y}_{j}[t-p], i=n \\
%0, \quad i \neq n
%\end{array}\right.
%\end{equation}

\begin{equation} \label{dCwrtthetaa4}
\frac{d C[t]}{d a_{i j}^{(p)}}=S_i f_{i}^{\prime}\left(\hat{y}_{i}[t]\right) \tilde{y}_{j}[t-p]_\cdot 
\end{equation}

% Even though the backpropagation cannot be done in a fully automated way, it can be realized by implementing equations \eqref{dgwrttheta1} and \eqref{derivativecost4} after automatically obtaining the necessary expressions.

\section*{Appendix B}
%Derivation of transformation equations.

Consider $\check{f}$ such that 

\begin{equation} \label{transform_f1}
\check{f}_i =  \check{b}_{i} + \sum_{j=1}^{M} \check{\alpha}_{ij} h\left(\check{w}_{ij}y_{i}-\check{k}_{ij}\right)
\end{equation}
$\check{f}_i(1) = 1, \check{f}_i(-1) = -1$, $\check{f}_i(x) = x$. 
where $\check{\alpha}_i, \check{w}_i, \check{k}_i \text{ and } \check{b}_i $ are the learned parameters corresponding to $\check{f}_i$. A new function $\check{f}^{1}$ is defined such that 
\begin{equation} \label{transform_f2}
\check{f}_i^{1} =  \check{b}_{i}^{1} + \sum_{j=1}^{M} \check{\alpha}_{ij}^{1}h\left(\check{w}_{ij}^{1}y_{i}-\check{k}_{ij}^{1}\right)
\end{equation}
\begin{equation} \label{f2_c1}
\check{f}_i^{1}(\ubar{z}) =  \check{f}_i(-1) \text{ and } \check{f}_i^{1}(\bar{z}) =  \check{f}_i(1)
\end{equation}
\begin{equation} \label{f2_c3}
\check{f}_i^{1}(ax+B) =  \check{f}_i(ax+B)
\end{equation}
from \eqref{transform_f1} and \eqref{f2_c3} $\check{w}_{i}^{1} = a\check{w}_{i}$ and  $\check{k}_{i}^{1} = a\check{w}_{i}B + \check{k}_i$.
from equation \eqref{f2_c1} and \eqref{f2_c3},
\begin{equation} \label{f2_c4}
a\ubar{z} + B = -1  \text{ and }  a\bar{z} + B = 1 
\end{equation}

from \eqref{f2_c4} $a = -2/(\ubar{z} - \bar{z})$ and $B = 2\bar{z}/(\ubar{z} - \bar{z})$
Let 
\begin{equation} \label{transform_f3}
\check{f}_i^{2} =  \check{b}_{i}^{2} + \sum_{j=1}^{M} \check{\alpha}_{ij}^{2}h\left(\check{w}_{ij}^{2}y_{i}-\check{k}_{ij}^{2}\right)
\end{equation}
such that
\begin{equation} \label{f3_c1}
\check{f}_i^{2}(\bar{z}) =  \bar{z} \text{ , } \check{f}_i^{2}(\ubar{z}) =  \ubar{z} \text{ and } \check{f}_i^{2}(x) =  c\check{f}_i^{1}(x) +d 
\end{equation}

from \eqref{transform_f3} $\check{b}_i^2 = c\check{b}_i + d$ and $\check{\alpha}_i^2 = c\check{\alpha}_i$. From \eqref{f3_c1}
\begin{equation} \label{f3_c4}
\ubar{z} = -c + d  \text{ and }  \bar{z} = c + d
\end{equation}

from \eqref{f3_c4} $d = (\ubar{z} + \bar{z})/2 \text{ and } c = (\bar{z} - \ubar{z})/2$

Hence $\check{\alpha}_i = c \alpha_i, \check{b}_i =c b_i+d, \check{w}_i = aw_i,\check{k}_i = -w_i B+k_i$ where $c = (\bar{z} - \ubar{z})/2, d = (\bar{z} + \ubar{z})/2, a = -2/(\ubar{z} - \bar{z})$ and $ B = 2\bar{z}/(\ubar{z}-\bar{z})$.

\vspace{12pt}

\bibliographystyle{IEEEtran}
\bibliography{bibliography}

\vfill

\end{document}